\documentclass[sigconf,screen,nonacm]{acmart}

\renewcommand\footnotetextcopyrightpermission[1]{}

\AtBeginDocument{%
	}

\usepackage{graphicx}
\usepackage{subcaption}

\usepackage{url}
\usepackage[disable]{todonotes}
\usepackage{listings}
\usepackage{amsmath}
\usepackage{tikz}
\usepackage{stmaryrd}
\usetikzlibrary{arrows.meta,positioning,fit,shapes.misc}

\usepackage{xcolor}
\usepackage{pgfplots}
\usepackage{pgfplotstable}
\usepackage{siunitx}
\usepackage{booktabs}
\usepackage{multirow}
\usepackage{calc}
\usepackage{enumitem}
\usepackage{algorithmic}

\pgfplotsset{compat=1.18}
\begin{document}
	
	\title{Overlaying Governance: A Compositional Authorization Framework for Delegation and Scope in Agentic AI}
	
	
	\author{Amjad Ibrahim}
	\affiliation{%
		\institution{Huawei Heisenberg Research Center}
\country{Germany}
	}
	\email{amjad.ibrahim@huawei.com}
	
	\author{Yong Li}
	\affiliation{%
		\institution{Huawei Heisenberg Research Center}
		\country{Germany}
	}
	\email{yong.li1@huawei.com}

	\renewcommand{\shortauthors}{First Author et al.}
	
			\begin{abstract}
	As AI systems evolve from passive models into autonomous active agents capable of initiating actions, collaborating, and delegating tasks, the traditional boundaries of software systems blur. Traditional authorization and delegation frameworks—built around fixed principals, explicit requests, and static scopes—are insufficient to govern agentic systems.
	Agentic AI demands richer authorization semantics: agents must inherit and delegate permissions, act under time-limited authority, and coordinate through shared protocols. Existing Identity and Access Management (IAM) systems fail to fully capture this notion of agency, lacking mechanisms for recursive delegation, contextual boundaries, and dynamic scoping as executable governance primitives.
	Unlike access delegation standards such as OAuth 2.0, we treat delegation as a contractual term rather than merely a static token-based consent credential. This paper proposes a compositional governance framework that introduces primitives indispensable for agentic AI. We define types of delegation and their permissions and accountability implications, and we introduce a notion of resource scope attenuation to bound agentic access envelopes. These concepts are expressed as general relational definitions that can be composed into existing authorization domains (e.g., financial systems).
	To operationalize this composition, we define a compositional operator that overlays new agentic semantics, such as recursive delegation chains, onto existing relational policies without rewriting them. We substantiate this framework through formal proofs and empirical evaluation, showing that it provides a formal yet practical foundation for accountable authorization in agentic AI systems.
\end{abstract}
	
	\keywords{authorization, access control, ReBAC, agentic AI}
	
	\maketitle
	
	
	\section{Introduction}\label{sec:intro}

The introduction of large language model (LLM)-based chat tools such as ChatGPT is already reshaping how we work~\cite{cazzaniga2024gen}, study, or conduct research~\cite{holmes2023guidance}. Each week, users exchange over 18 billion messages with ChatGPT, with a user-base accounting for nearly 10\% of the global adult population~\cite{gpt1}. While information retrieval remains the dominant use case, a new generation of AI systems is emerging that help users \emph{act} rather than merely \emph{ask}—so-called \emph{Agentic AI}. Agentic AI systems are software components that incorporate language models and can autonomously plan and execute actions based on user input and contextual awareness~\cite{openid,openAi,owasp}.\footnote{We use the terms \emph{Agentic AI} and \emph{AI agents} interchangeably.} 

Agentic AI is finding applications across domains such as finance (e.g., identity verification) and healthcare (e.g., patient monitoring~\cite{KARUNANAYAKE202573}). Since these agents tackle a wide range of tasks, \emph{collaboration and interoperability} become essential. To support this, industry initiatives such as the Model Context Protocol (MCP) define how agents integrate with tools, systems, and data sources, while the Agent2Agent (A2A) protocol specifies how agents from different vendors can search, and communicate with each other~\cite{agenticProto}. As a result, the traditional boundaries of software systems, once used to define trust domains and access control assumptions, no longer hold. Governance of such autonomous, interconnected agents using traditional methods has become impractical~\cite{Huang}.

From an \textit{authorization} perspective—the process of determining whether a principal may perform an action on a resource—Agentic AI introduces new challenges~\cite{openid,Huang,Syros}. Agents \textit{act on behalf} of users to achieve tasks, in the process they coordinate with other agents, and recursively delegate sub-tasks. Consequently, the active principal in an action may be a human, an agent delegated from a user, or an agent spawned by another agent. Even if we can distinguish between these actors, we must still determine their permissions: should agents inherit all user permissions, or should there be distinct \emph{types} of delegation? How can we govern and account for recursive delegations and contextual constraints without redesigning every existing authorization model individually?

Access delegation standards such as OAuth~2.0 provide limited delegation through access tokens, allowing applications to act on behalf of users within pre-defined \emph{scopes} without sharing credentials~\cite{oauth}. Scopes are typically a subset of user capabilities. While effective for static service workflows, OAuth is ill-suited to dynamic, recursive delegation chains~\cite{Huang,openid}. To support recursion, we can issue multiple nested tokens. Regardless of their operational overhead, the content of these tokens is fixed. Agents can generate novel actions that were not pre-enumerated in a scope~\cite{authenicatedDEL}. Thus, traditional token consent mechanisms cannot express the dynamic, and recursive authorization relations required in agentic ecosystems.  

We treat \emph{delegation} as a first-class governance primitive in Agentic AI authorization. Inspired by the human (legal) notion of delegation as a \emph{contractual transfer of duties}~\cite{LII_Wex_delegate_misc}, we define an \emph{agentic delegation} as a runtime predicate (term) that carries constraints, e.g., ``expires in 10 seconds'' or ``valid only on secure hardware.'' These predicates form a chain that represents the delegation state and enables dynamic contextual evaluation of authority.

Similarly, we view \emph{scope} as a set of contextual boundaries or \emph{envelopes} that constrain what delegation covers. Delegations along a chain must progressively narrow in scope, a property often referred to as \emph{attenuation}~\cite{openid}. Scopes determine the permissible range of resources, e.g., ``allow agent to edit proposals but not budgets.'' With the dynamic nature of these agents, specifying the allowed (or denied) range of resources is more practical~\cite{authenicatedDEL}. Together, delegation and scoping form  contractual \textit{primitives} for governing how humans and agents interact, turning authorization from a static credential into an \emph{executable governance term} evaluated continuously.

This paper introduces a framework that enables agents to inherit or delegate permissions, act under conditional authority, and allow users to trace the authorization of their agents. We present a \emph{compositional governance model} that integrates \textit{delegation} and \textit{resource scoping} into agents’ runtime semantics. The framework generalizes contractual delegation into a relational form that can be composed into existing authorization models, providing a foundation for accountable, contextual, and dynamic agentic authorization.

Operationally, agentic AI ecosystems are inherently relational: users delegate to agents, which may in turn delegate to other agents, all operating under the user’s umbrella of authority~\cite{openid}. We formalize these governance primitives as relations and define a \emph{compositional operator} that fuses existing domain-specific access control policies with our agentic semantics. Our formulation builds on Relation-Based Access Control (ReBAC)\cite{cheng2012relationship,giunchiglia2008relbac} and Google’s Zanzibar authorization model~\cite{Pang2019}, using OpenFGA~\cite{openfga} as the open-source reference implementation.



To the best of our knowledge, no prior work defines a contractual notion of delegation together with a compositional operator to operationalize AI governance primitives. This paper contributes:
\begin{enumerate}
	\item A conceptualization of delegation types and resource scoping as key drivers of agentic access.
	\item A formalization of delegation as an agentic governance overlay and a compositional operator to fuse it into authorization domains, drawing from graph transformation theory~\cite{ehrig2006fundamentals}.
	\item A security architecture that illustrates the usage of the resulting graph to govern users, agents, scopes, and delegations sessions.
	\item Preservation and agent-authorization soundness proofs, together with empirical benchmarks showing the runtime overhead of the overlay on large synthetic ReBAC models. \footnote{All models, code, and experiments related to this paper are available at: \url{https://github.com/Amjad-Ibrahim-Huawei/compositional-paper}.}
\end{enumerate}

Our approach results in a runtime \emph{authorization graph} that represents the state of delegation, scoping, and interactions among users and agents. From a zero-trust perspective, this enables continuous verification, granting, and revocation of agentic interactions. 
The remainder of this paper is organized as follows: Section~\ref{sec:bg} provides the necessary background; Section~\ref{sec:perspective} defines our governance primitives and base model; Section~\ref{sec:arch} presents the compositional operator and implementation architecture; Section~\ref{sec:eval} validates the approach via proofs and empirical evaluation; Section~\ref{sec:rel} discusses related work; and Section~\ref{sec:conclusion} concludes.

	\section{Background}\label{sec:bg}
Agentic AI systems utilize reasoning to autonomously achieve tasks on behalf of users with limited supervision. 
Each agent combines reasoning (“brain”), environmental awareness (“perception”), and the ability to interact (“action”)~\cite{xi2023risepotentiallargelanguage}, forming a persona that encodes its role, accessible tools, and peer interactions~\cite{masterman2024landscape}. Emerging protocols such as MCP and A2A standardize how agents communicate with tools, data sources, or each other~\cite{agenticProto,mcp}.

Examples of Agentic AI span from chat interfaces that trigger tools via MCP (e.g., bots creating tickets), to automation agents that implement end-to-end tasks such as insurance claims processing, to networks of agents that reason and communicate with each other (e.g., market negotiation agents~\cite{zhu})~\cite{openid}. Regardless of whether these agents are enterprise, coding, or client-facing systems, they introduce new security and governance challenges.

A key source of these challenges is that agent behavior is not fully specified in advance~\cite{openAi}. Agents reason over retrieved content, tool outputs, and evolving context, and then translate that into actions. In particular, \textit{prompt injection attacks} can hide malicious instructions inside external content such as documents, causing an agent to e.g., exfiltrate data~\cite{shan2026don}. 

According to the OWASP threat model for Agentic AI, attacks arise across six dimensions: agency and reasoning, memory and context, tools and execution, identity and authentication, human management, and multi-agency coordination~\cite{owasp}. We focus on unauthorized access and unauthorized action execution as they arise from overly permissive agent authority. 
This includes classical privilege compromise, but also prompt injection and harmful misoperation scenarios. 
Across these scenarios, the common failure mode is insufficiently constrained runtime privilege.

For practicality, researchers and protocol designers often reference standards such as (Open Authorization) OAuth for secure access delegation of AI agents~\cite{oauth,mcp,authenicatedDEL,openid}.\footnote{For example, MCP requires OAuth~2.1, which enhances token exchange protection and dynamic client onboarding, but remains identical in delegation and scoping semantics.} 
OAuth allows users to grant third-party applications access to protected resources without sharing credentials~\cite{oauth}. Despite its name, OAuth focuses on access delegation rather than authorization per se: an authenticated \textit{resource owner} issues a token—expressing consent, scope, and duration—via an \textit{authorization server}, allowing a \textit{client} to access a \textit{resource server}. This model has been extended to represent human-to-agent delegation via delegation tokens~\cite{authenicatedDEL}, enabling enterprises to reuse existing identity infrastructures for agentic AI. 

However, once issued, token contents remain static and cannot adapt to environmental or contextual changes~\cite{Huang}. 
Moreover, agentic systems often require recursive delegation, i.e., agents delegating to other agents; something OAuth was never designed to handle efficiently. For instance, if a coding agent \(C\) delegates a task to a ticketing agent \(T\) on behalf of a developer, multiple nested tokens would be needed. 
OAuth scopes constrain what an application may do on a user’s behalf but do not define the user’s own authorization model. 
Thus, scopes capture approval for static actions, while actual authorization decisions remain at the resource server.  Agents, however, require delegation and scoping mechanisms that support recursion, transitive permissions, and contextual runtime evaluation~\cite{openid,owasp,Huang,Syros,Wang}.

Agents therefore demand fine-grained, contextual authorization that extend beyond static roles and tokens. 
Traditional Role-Based Access Control (RBAC) cannot capture dynamic conditions such as time, device type, or hierarchical relationships. 
Nor can prompt-only guardrails reliably prevent prompt injection once an agent is connected to powerful tools. 
Instead, we argue for deterministic policy architectures and reusable governance primitives that build least privilege into the system itself rather than relying on model behavior~\cite{palumbo2026policy}. 
Least privilege means that authority granted to an agent should be explicitly delegated, limited, contextual, recursively controllable, and auditable.  We therefore argue for general governance primitives that can be standardized rather than re-implemented in every system. We look into this in detail in Section~\ref{sec:perspective}.
\subsection{Relation-Based Access Control (ReBAC)}
Our work builds upon ReBAC~\cite{cheng2012relationship,giunchiglia2008relbac} as the underlying authorization model.  
ReBAC defines permissions through relationships among users and resources (e.g., a user can access a document if they are its owner). 
By supporting dynamic and hierarchical relations, ReBAC is well-suited to collaborative and multi-agent environments. Although we frame our approach in a ReBAC setting, it is not restricted to purely relationship-based systems. 
ReBAC can encode RBAC roles as groups and can model many ABAC-style conditions through guarded relations, making it a convenient unifying substrate. 
Since our approach will be applied conjunctively with the domain’s existing decisions, it can sit on top of RBAC-, ABAC-, or hybrid policies without altering their semantics. 

A recent paper presented, Google Zanzibar, a highly efficient implementation of ReBAC~\cite{Pang2019}. 
Because traditional ReBAC (and Zanzibar) lacks support for contextual conditions, we adopt OpenFGA: an open-source Zanzibar system that extends ReBAC with rules evaluation on edges. 
OpenFGA provides fine-grained, relational authorization semantics suitable for expressing agentic delegation and scoping~\cite{openfga}. The following formalizes the core semantics of ReBAC and its userset algebra~\cite{Pang2019}, which serve as the formal substrate for our later extensions to agentic governance.

Let $\mathcal{U}$ be the set of principals (subjects), $\mathcal{O}$ the set of protected objects, and $\mathcal{L}$ the set of relation labels (e.g., \textit{owner}, \textit{viewer}). 
A relation tuple is $(o,\ell,x)$ with $o \in \mathcal{O}$, $\ell \in \mathcal{L}$, and $x \in (\mathcal{U} \cup \mathcal{O})$. 
The \emph{authorization graph} is a labeled multigraph $G=(V,E)$ where $V = \mathcal{U} \cup \mathcal{O}$ and $E \subseteq \mathcal{O} \times \mathcal{L} \times (\mathcal{U} \cup \mathcal{O})$.

A ReBAC policy specifies, for each object $o$ and relation $\ell$, a userset 
$\mathsf{Users}(o,\ell) \subseteq \mathcal{U}$, typically defined by graph reachability or set–algebraic expressions over tuples and other usersets.  
Authorization is a membership query: 
\[
\mathsf{Check}(u,\ell,o) \triangleq (u \in \mathsf{Users}(o,\ell)).
\]
ReBAC naturally expresses nested groups and resource hierarchies via transitive relations, and it has been formalized as graph reachability and as logical encodings~\cite{pwlfong_sacmat2011,DTR11-12}.

\noindent\textbf{Typed schema $C$ and userset algebra}
Let $\mathcal{T}$ be a finite set of object types. 
For each $t \in \mathcal{T}$:
(i) $O_t \subseteq \mathcal{O}$ are the objects of type $t$; 
(ii) $\mathcal{R}_t \subseteq \mathcal{L}$ are the relations of $t$;
(iii) each $\ell \in \mathcal{R}_t$ has a subject domain $D_{t,\ell} \subseteq (\mathcal{U} \cup \mathcal{O})$.
Each $(t,\ell)$ has a userset rewrite $e_{t,\ell}$ built from the minimal Zanzibar algebra: 
\emph{direct} (\textsf{this}), \emph{computed userset} (\textsf{computed}$(\ell')$), 
\emph{tuple-to-userset} (\textsf{from}$(\rho,\ell')$), and set operators 
$\cup,\cap,\setminus$. 
Here \textsf{from}$(\rho,\ell')$ follows an object-to-object relation $\rho$ (e.g., \textit{parent}) and reads $\ell'$ on the reached object.
(OpenFGA renders these as \texttt{X from Y}, \texttt{or}, \texttt{and}, \texttt{but not}.)

\paragraph{Conditions}
Let $\Gamma$ be a set of conditions; a condition $\gamma\in\Gamma$ is a predicate with a context schema. 
They may be attached to \emph{direct} edges and are evaluated at check time; they do not extend the algebra —they simply toggle whether an edge is “present” for a given check context~\cite{openfga}.

\paragraph{Data plane (E)}
The set of labeled edges (tuples) is
\[
E \subseteq \bigcup_{t \in \mathcal{T}} 
\Big( O_t \times \mathcal{R}_t \times D_{t,\ell} \Big)
\times \mathrm{Params}_\Gamma ,
\]
i.e., each element is an instantiated $(o,\ell,x)$ consistent with the typed schema, optionally with conditions.%
\paragraph{Check}
Given schema $C$ and tuples $E$, the denotation of a userset is defined by structural recursion on the rewrite:
\[
\llbracket \ell \rrbracket_{C}^{E}(o,\mathrm{ctx}) \subseteq \mathcal{U},
\]
reading direct edges from $E$ whose guards hold under $\mathrm{ctx}$ and evaluating 
usersets according to $e_{t,\ell}$. 
The decision is
\[
\mathsf{Check}(u,\ell,o;C,E,\mathrm{ctx}) \iff 
u \in \llbracket \ell \rrbracket_{C}^{E}(o,\mathrm{ctx}).
\]
ReBAC schemas require the graph to be acyclic, ensuring well-founded and deterministic evaluation. We rely on these standard well-foundedness assumptions of typed ReBAC engines. Our overlay does not introduce a separate execution semantics for authorization checks; rather, it adds well-typed relations evaluated by the same userset machinery. Accordingly, liveness of authorization evaluation is inherited from the underlying engine's guarantees for recursive usersets and schema-valid relation graphs.

	\section{A Relational Perspective on Agentic AI}
 \label{sec:perspective}
\lstset{basicstyle=\ttfamily\footnotesize,columns=fullflexible,breaklines=true,frame=single}
In this section, we formalize the authorization relations that arise in Agentic AI. We elicit the core requirements, then develop their implications for delegation, and finally present the captured model.

\subsection{Authorization Requirements for AI Agents}\label{subsec:reqs}
At a first glance, AI agents appear as an extension to traditional software with intelligent components. However, especially when considering access control requirements, agents possess unique characteristics~\cite{Huang,owasp}. One of their main features is autonomy. They do not maintain fixed boundaries like traditional software. Rather, an AI agent can execute tools and further, discover other agents and interact with them at runtime~\cite{openid,agenticProto}. This dynamic composition, in turn, means that the system boundary cannot be specified at design time. Thus, governing the authorization of agents via methods that assume predefined set of states is impractical. The same openness also creates new risks: because agents interpret untrusted content and synthesize actions at runtime, prompt injection can be realized. From our perspective, this makes least-privilege enforcement a first-class requirement for agentic systems.

AI agents complete tasks on behalf of a user, e.g., \textit{write tests for a developed feature}. In the process, the agent decides to access a resource, e.g., a design document in the documents drive, then read the document using a PDF reader. At a later stage, the agent would decide to push code to a repository and commissions this to an agent that handles pull requests. Even for such simple tasks, agents interact with several components that inherently require different permissions systems (documents access, repository privileges). Another remark is that the actor differs along the steps of this process. We need a mechanism to control this evolving nature of agent’s behavior. This mechanism enables reasoning about the interaction among users, and agents allowing them to assert whether the interacting entity is an agent or a user, the chain of delegations that the user is acting upon, and the permission scope for this entity.  We present the following set of requirements we aim to achieve. 
\begin{itemize}
	 \item [RQ1] \textbf{Agents are first-class actors}: from an identity perspective, agents should not impersonate users or deterministic services, rather, they should have a distinct notion of identity~\cite{openid}. They must function only with a user delegation. 

 	\item [RQ2]	\textbf{Delegation as the core mechanism for human–agent and agent–agent interaction}: agents must act according to the permissions they receive through delegation. AI delegation  must be treated as a \emph{contractual} relationship—one that defines not only who may act on behalf of whom, but under what constraints and for what purposes. Existing standards provide only limited semantics for such forms of delegation and do not capture the richer behaviors required for agents.
 	In Section~\ref{subsec:delegation}, we discuss this notion and introduce the types of delegation necessary for agents.
	
\item [RQ3]	\textbf{Delegation and scoping as authorization primitives}: 
given a means to identify an agent and to establish a delegation to it, the authorization system must incorporate these relations directly into its access rules. Every action performed by an agent must therefore be validated against (i) the agent’s own identity and (ii) the authority it inherits through delegation and scoping at the time of the request. This is the point at which least-privilege constraints become operational: runtime actions should be permitted only when explicitly justified by bounded delegation and valid scope.

\item [RQ4] \textbf{Observable, traceable, and accountable authorization state}: 
because the boundaries of agentic systems change, preventing all misuse is impractical. Robust \emph{detection} and \emph{accountability} mechanisms are therefore essential—both for improving authorization policies over time and for conducting reliable forensic analysis. 
To support these goals, the system must expose a faithful record of authorization-relevant events both at runtime and retrospectively for audit. 
 
\item [RQ5]\textbf{Contextual authorization}: Agentic AI introduces dynamic interaction patterns; authorization decisions must adapt accordingly.
Both access checks and delegation evaluations should incorporate contextual factors, e.g., network location, or request time, when determining if an action is permitted.
 
 \item [RQ6]\textbf{Fine-grained reuse of existing access rules}: because agents can trigger a wide range of actions, enumerating all allowed operations is infeasible.
 A more realistic approach is to \emph{limit resource access} based on existing enterprise or personal authorization policies rather than recreating them for agents~\cite{authenicatedDEL}.
 Agentic systems should therefore be able to express fine-grained access rules and \emph{reuse} existing permission structures—for example, allowing an employee’s AI assistant to inherit the employee’s document-access rights without re-implementing the organization’s policy logic.
 \end{itemize}
\subsection{Agent Delegation}\label{subsec:delegation}
 Although our goal is an operational definition that can be encoded and evaluated by access-control engines, we present a delegation notion grounded in the human, legal understanding of delegation. 
 In contract law, \textit{delegation} refers to the transfer of contractual duties from one party to another. 
 According to Cornell’s Legal Information Institute, three parties are involved: the \emph{delegator}, who assigns the duty; the \emph{delegate}, who is responsible for performing the duty; and the \emph{obligee}, who is entitled to receive the performance~\cite{LII_Wex_delegate_misc}.
 
 We adopt this structure for agents. 
 Delegation becomes a contractual relation between a human (delegator) and an agent (delegate); the agent may subdivide portions of that delegated authority to another agent. 
 Such contracts are \emph{dynamic}: they vary according to the conditions under which the delegation is valid. 
 For example, an employee might authorize \texttt{agent:email} to read her mailbox only when operating inside the corporate network. 
 This is not an absolute delegation, but a \emph{contextualized} one governed by constraints. 
 Allowing a delegate to act as a further delegator complicates tracking the delegation chains and their conditions. 
 Unlike legal contracts, these chains are \emph{runtime artifacts}, continuously evaluated to determine if an agent is authorized to perform an action.
 
 Another crucial aspect of delegation is the ability to \emph{attenuate} its scope~\cite{openid}.  
The notion of \emph{scope} is inherently abstract: it may refer to an action (e.g., deleting a document), to a resource (e.g., medical data), or to a combination of both (e.g., editing a budget report). Constraining the full action space of agents is difficult, whereas constraining \emph{resource} access is more tractable~\cite{authenicatedDEL,1024139}. 
 Regardless of its form, supporting attenuation requires a \emph{semantic ordering} over the scopes.  
Such orderings are domain-specific. Our model enforces scope compatibility through envelopes; a general semantic ordering for attenuation remains domain-specific.

While our approach inherits the same domain-specificity, we argue that representing delegation chains as \emph{contractual relations} within a relational model enables more precise scope attenuation.  
By expressing delegation, scoping, and resource structure as relations in a graph, the system maintains a runtime–mutable view of authorization state.  
This allows scope restriction to follow structural properties (e.g., parent–child relationships) rather than relying on static tokens.  Authorization decisions are thus made by evaluating the current graph state, which can evolve as delegations and scopes change. We now define the delegation types relevant for agents. These types build on decentralized authorization and trust-management logics ~\cite{li2003delegation,BeckerFournetGordon2007}, but are adapted to our requirements. 
   \begin{enumerate}
\item \textbf{Full delegation (unconditional ``speaks-for''):}  
The agent acts on behalf of the delegator without constraints.  
This corresponds to classical impersonation.

\item \textbf{Scoped delegation with attenuation:}  
The delegate may act only on a \emph{subset} of actions and/or resources permitted to the delegator.  
This is analogous to OAuth scopes.  The delegation applies to certain facts in the authorization system.

\item \textbf{Conditional (contextual) delegation:}  
The delegate may exercise the delegated authority only when specific conditions are met (e.g., $\textsf{region} = \text{EU}$).  
Conditions capture contextual requirements and may be combined with scoped delegation, yielding \emph{scoped conditional} delegation.  
A combination with full delegation reduces to a conditional delegation.

\item \textbf{Depth-bounded delegation:}  
Delegation may propagate up to a fixed length~$K$.  
$K=0$ forbids onward delegation; $K=1$ permits a single hop; and larger values encode controlled transitivity. Depth bounds are an optional constraint for limiting onward delegation, it is not the sole source of termination of authorization evaluation, which remains governed by the semantics of the underlying ReBAC engine.
 
\item \textbf{Temporal delegation:}  
A special case of conditional delegation in which the validity  is restricted by time-based predicates (e.g., expiry timestamps, or ``not before'' constraints).

\item \textbf{Group delegation:}  
A delegation is valid only if authorized by multiple principals (e.g., $n$-of-$m$ approval).  
This captures collaborative authorization patterns.

 \end{enumerate}
 
 \subsection{Relational Agentic Authorization}
 Practitioners and researchers increasingly look to OAuth as the basis for delegation in agentic systems. 
 For example, MCP recommends OAuth~2.1, and South et~al.\ propose an OAuth extension that incorporates an explicit agent-delegation token~\cite{authenicatedDEL}. 
 Other work explores decentralized approaches that rely on DIDs and VCs~\cite{Huang}. 
 These efforts contribute important compatibility mechanisms. 
 However, they treat delegation primarily as a \emph{credential} workflow—issuing, exchanging, and verifying tokens that encode a consent. 
 This view captures secure transfer of authority but does not address the recursive structure of delegation chains, or accountability requirements central to Agentic AI.
 
To that end, we introduce a relational schema that captures the relevant entities and supports runtime evaluation of agents authorization. The core principle of our approach is that an \emph{authorized agent} must be connected to a resource through a delegation chain that originates in a user who is permitted to access that resource.  Context, scope, and conditions enrich this chain, supporting scoped and contextual delegation, while leaving domain-specific attenuation orderings to the policy designer.

 We capture these relationships using a graph-based model inspired by ReBAC~\cite{cheng2012relationship}.  
 The authorization state is represented as a directed graph $E$ whose edges encode relationships between principals and resources (e.g., “User $u$ delegates to Agent $a$”).  
 Authorization checks are formulated as graph queries of the form: \emph{“Does subject $x$ hold relation $r$ to object $y$?”}~\cite{Pang2019}.
 
 To support this, we introduce a general-purpose, domain-agnostic scheme $C$ (illustrated in Fig.~\ref{fig:agentic-authorization}) that specifies the rules for constructing these graphs.  At runtime, $C$ is instantiated into a dynamic authorization graph $E$ that evolves with delegation, context changes, and access requests.
 A central concept in our design is the agent’s \emph{authorization envelope}: the dynamic set of resources the agent may access together with the contractual (conditional) terms under which access is valid.  
 The envelope is computed as the intersection of three factors:  
 (1) the agent must receive delegated authority from a user who is authorized for the resource;  
 (2) the agent must be operating within an active scope in which access is permissible; and  
 (3) the requested resource must lie within the scope of the delegation itself.

\begin{figure}[t]
	\centering
	\includegraphics{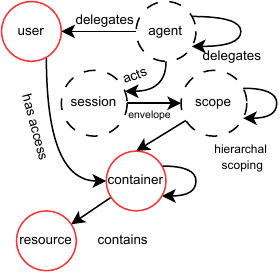}
	\caption{Schema of types (nodes) and key relations (edges) needed for Agentic AI authorization. Red nodes refer to types that already exist in ReBAC policies.}
	\label{fig:agentic-authorization}
\end{figure}

\subsubsection{Model Construction}
Our model construction is guided by a single question:
	\emph{``Why is agent $A$ allowed to perform action $X$ on resource $O$ on behalf of user $U$?''}
To encode this reasoning, we introduce three families of relations:  
(i) \textbf{delegation}: who may act for whom;  
(ii) \textbf{scope}: under which contextual constraints a delegation is valid; and  
(iii) \textbf{resource linkage}: how human permissions lift to agents.  
We use OpenFGA syntax to present types and relations, focusing on the key relations for \texttt{view}/\texttt{viewer}; additional permissions (e.g., \texttt{editor}) follow analogously.

\noindent{\textbf{Principals \& Delegation.}}
We first model the principal entities: users and agents (agent's syntax omitted for readability).  
A user may delegate authority to one or more agents, and agents may onward-delegate to sub-agents.  
The recursive userset \texttt{can\_execute....} captures this transitive delegation chain; conditions such as temporal or attribute guards can be attached to \texttt{delegatee} edges.

\noindent\begin{minipage}{\columnwidth}
	\begin{lstlisting}
type user # same for agents
  relations
	define delegatee: [agent, agent with temporal_delegation, agent with conditional_delegation]
	define can_execute_on_my_behalf: delegatee or can_execute_on_my_behalf from delegatee
	\end{lstlisting}
\end{minipage}

A delegatee edge from a user to an agent represents a contractual delegation: the agent may act on user’s behalf, possibly under conditions (e.g., time-limited).  
The relation \texttt{can\_execute\_on\_my\_behalf} is the transitive closure of these delegations.  
Evaluating this userset for $U$ yields the complete set of agents reachable via delegation paths rooted at $U$.  
Defining same relations on the \textit{agent} type enables recursive delegation: an agent can delegate to another.

A key invariant is that this closure is typed: the relation \texttt{delegatee} ranges only over subjects of type \texttt{agent}, and never traverses back into the user namespace. This is an important disjointness that is enforced at the schema layer rather than the identity-provider layer. So even if humans and agents share the same object type in the identity provider, they are represented as distinct typed principals inside the overlay. This restriction prunes the recursive search space, and avoids loops in the delegation closure.

\noindent{\textbf{Execution Context \& Scopes (the ``envelope'').}
A \texttt{session} represents a live instance of an agent acting under a specific delegation.  
Each session is associated with a \texttt{scope}, which captures the organizational context within which the agent may operate (e.g., a tenant, project, or resource collection). Scopes are hierarchical, reflecting nesting among resource categories (e.g., document~$\subset$ folder~$\subset$ workspace~$\subset$ organization).  
Together, the active \emph{delegation} and the session’s \emph{scope} determine the agent’s authorization \emph{envelope}.

\noindent\begin{minipage}{\columnwidth}
	\begin{lstlisting}
type session
  relations
	define actor: [agent]
	define as_agent: actor

type scope
  relations
	define parent: [scope]
	define holder: [session, session with ...]
	define sessions: holder
	define ags_direct: as_agent from holder
	define agents: ags_direct or agents from parent
	\end{lstlisting}
\end{minipage}
An agent is \emph{in scope} if it appears in \texttt{scope\#agents}, i.e., one of the agents that hold valid sessions in that scope (or of its ancestors). Authorization checks bind the acting principal via \texttt{session\#actor}.
Scopes form a tree through \texttt{scope\#parent}, and a scope receives sessions via \texttt{scope\#holder}, which may include guards.  
The derived relations \texttt{scope.sessions} and \texttt{scope.agents} aggregate local and inherited memberships.  
Thus, a scope defines the agent’s effective authorization envelope: an agent may access a resource only if its active session lies within the resource’s scope or its ancestors.

\noindent{\textbf{Resources \& Derived Agent View.}
We now model the resources themselves and connect all components of the system to enforce the two core authorization requirements:  
(A) an agent must be \emph{in scope} for the resource, and  
(B) the agent must be delegated by a user who has permission to access that resource. We distinguish between \texttt{container} types (folders) and individual resources (documents). Containers form a hierarchy via \texttt{parent}, and store the base rule for human users (\texttt{viewer}, \texttt{owner}).  
Each container is attached to a \texttt{scope}, enabling scope-based evaluation of agent sessions. We omit resources in the following for brevity.

\noindent\begin{minipage}{\columnwidth}
	\begin{lstlisting}
type container
  relations
	define parent: [container]
	define in_scope: [scope]
	define viewer: [user]
	# Human viewers (with inheritance)
	define hu_can_view: viewer or .. from parent
		
	# (A) valid Agents in this container's scope
	define ags_in_scope: agents from in_scope
		
	# (B) delegated by human viewers
	define chain_agents_for_view:
	can_execute_on_my_behalf from viewer
	or chain_agents_for_view from parent	
	
	# envelope (A) Intersection delegation (B)
	define delegated_agent_viewer: ags_in_scope and chain_agents_for_view
	# Final view: human OR authorized agent
	define can_view: hu_can_view or delegated_agent_viewer
	\end{lstlisting}
\end{minipage}

Each \texttt{container} is tied to a contextual \texttt{scope} via \texttt{in\_scope}; thus, all resources under that container inherit the scope’s authorization \emph{envelope}.  
For agents, we derive two sets:  
(A) \texttt{ags\_in\_scope}, the agents with active sessions in the container’s scope (or inherited from ancestor scopes), and  
(B) \texttt{chain\_agents\_for\_view}, the agents reachable through delegation from the human \texttt{viewer}s of that container (including inherited viewers).  
An agent may view a resource if it appears in the \emph{intersection} of these two sets—i.e., it is in the correct scope and properly delegated. 

\noindent{\textbf{Conditions (for delegation types).} 
We express different types of delegation by attaching predicates directly to the \texttt{delegatee} relation. 
While we show temporal delegation, the same condition interface can
host other trusted predicates. Depth-bounded delegation can be encoded with additional schema patterns. 
For a fixed small bound \(K\), bounded delegation can be encoded by stratifying the closure into relations \(\mathsf{can\_execute}^{(0)},\ldots,\mathsf{can\_execute}^{(K)}\),
where each level follows one additional \(\mathsf{delegatee}\) edge.
This is a finite schema expansion and is therefore compatible with the
overlay, but it is omitted from the core presentation.
Similarly, \(n\)-of-\(m\) approval can be represented by a trusted  predicate whose relation is checked before the delegation tuple is admitted. We do not claim a general counting operator in the base ReBAC algebra.

\noindent\begin{minipage}{\columnwidth}
	\begin{lstlisting}
condition temporal_delegation(expires_at: timestamp, current_time: timestamp) { current_time < expires_at}
	\end{lstlisting}
\end{minipage}

\paragraph{Illustrative Example}
Consider a user \texttt{bob} who delegates authority to \texttt{agent1} through the relation  
\texttt{\textlangle bob~delegatee~agent1\textrangle}  
with a validity for 1 hour.  
A session \texttt{s1} is created for \texttt{agent1}  
\texttt{\textlangle s1~actor~agent1\textrangle}  
and is placed in scope via  
\texttt{\textlangle org/eng~holder~s1\textrangle}.  
This means that \texttt{agent1} is active within the contextual \emph{envelope} defined by the \texttt{org/eng}.
Assume a folder is associated with that scope \texttt{\textlangle folder1~in\_scope~org/eng\textrangle} 
and contains \texttt{design-document}.  
User \texttt{bob} is a declared \texttt{viewer} of this folder  
\texttt{\textlangle folder1~viewer~bob\textrangle}.  

Because \texttt{agent1} (A) appears in the delegation chain of \texttt{bob},  
and (B) has an active session in the container’s scope,  
the intersection that defines \texttt{delegated...\_viewer} is non-empty.  
Thus the authorization relation  
\texttt{\textlangle agent1 can\_view eng-folder\textrangle}  
holds.  
If the condition expires,  
or if the session is removed from the scope,  
the intersection becomes empty, and the agent loses access.

	\section{Operational Governance for Agentic AI} \label{sec:arch}
With the base model in place, we have a principled foundation for enforcing governance in agentic AI.  
However, designing a bespoke authorization model for every domain in which agents operate (e.g., documents, code generation) is impractical.  
To address this, we examine how to \emph{operationalize} enforcement through three components:  
a compositional operator that injects agentic primitives into existing domain models (Section~\ref{sec:operator}),  
an architecture for runtime evaluation (Section~\ref{sec:architecture}),  
and an illustrative  use case (Section~\ref{sec:usecase}).
\subsection{Overlay as a Typed Graph Rewrite} \label{sec:operator}
We aim to extend a domain authorization model expressible as typed
ReBAC with agentic governance primitives. While similar goals could
be pursued in other policy languages, the relational structure of
agents naturally suggests a graph-based construction.  Our
operator overlays delegation, scope, and contextual constraints onto
an existing domain schema without rewriting its human-facing logic.  Human access remains authoritative; agent access
is derived as the intersection of delegated authority and contextual
scope. The construction is related to policy-combination
frameworks~\cite{BonattiSamarati2002} and to double-pushout (DPO) graph
rewriting~\cite{ehrig2006fundamentals}, but here the purpose is to
inject reusable agentic primitives into ReBAC schemas.

We now make precise the class of domain schemas to which the
overlay operator applies.  A typed ReBAC schema is represented as
a tuple
\[
C=(\mathcal{T},\mathcal{R},\mathsf{subj},e),
\]
where \(\mathcal{T}\) is a finite set of object types,
\(\mathcal{R}_T\) is the finite set of relation symbols available on
type \(T\), \(\mathsf{subj}(T,r)\) is the subject-domain declaration
of relation \(r\in\mathcal{R}_T\), and \(e_{T,r}\) is the userset
expression defining \(r\).  Userset expressions are built from
\(\mathsf{this}\), computed usersets, tuple-to-userset, union,
intersection, and difference, as in Section~2.

For rewriting, we view \(C\) as a finite typed graph
\(G_C\).  The graph contains nodes for types and relation
occurrences \((T,r)\), with edges recording subject domains and
userset dependencies.  For example,
if \(e_{T,r}\) contains \(r'\) from \(\rho\), then \(G_C\) contains
dependency edges from \((T,r)\) to \((T,\rho)\) and to the relation
reached by \(\rho\).  This graph representation is used only to
specify schema transformation; authorization semantics remain the
standard userset denotation \(\llbracket - \rrbracket^E_C\).

Let \(B\) be the agentic overlay schema containing fresh types
\(\mathsf{agent}\), \(\mathsf{session}\), and \(\mathsf{scope}\),
and fresh overlay relations such as
\(\mathsf{delegatee}\),
\(\mathsf{can\_execute\_on\_my\_behalf}\),
\(\mathsf{holder}\),
\(\mathsf{actor}\),
\(\mathsf{in\_scope}\),
\(\mathsf{ags\_in\_scope}\),
\(\mathsf{chain\_agents\_for\_r}\), and
\(\mathsf{delegated\_agent\_r}\).
Freshness means that these overlay-introduced names do not occur in
the domain schema \(C_D\).  The existing domain relations used as
the interface to the overlay are selected separately by the lift
specification
$
\mu=(\mathcal{L},\mathsf{root},\mathsf{parent}).
$
Here \(\mathcal{L}(T)\subseteq\mathcal{R}_T\) is the set of domain
permissions to lift for type \(T\), \(\mathsf{root}(T,r)\) is the
human-root userset expression from which delegation for permission
\(r\) is derived, and \(\mathsf{parent}(T)\) is an optional hierarchy
relation used for inherited permissions. The lift specification must satisfy the following applicability conditions.

\begin{description}
	\item[A1: Freshness.] The overlay-introduced type and relation names 	are fresh with respect to \(C_D\).
	\item[A2: Well-typed roots.] For every \(T\) and \(r\in\mathcal{L}(T)\),
	the expression \(\mathsf{root}(T,r)\) is well-typed in \(C_D\) and
	denotes only human principals: \quad
	$\llbracket \mathsf{root}(T,r)\rrbracket^{E_D}_{C_D}(o,\mathit{ctx})
	\subseteq U .$

	\item[A3: Root adequacy.] The root expression is no more permissive
	than the original domain permission:
\\ \quad
$	\llbracket \mathsf{root}(T,r)\rrbracket^{E_D}_{C_D}(o,\mathit{ctx})
	\subseteq
	\llbracket r \rrbracket^{E_D}_{C_D}(o,\mathit{ctx}) .
$
	\item[A4: Agent disjointness.] The fresh agent type \(A\) is disjoint
	from all domain principals:
	$
	A\cap P_D=\emptyset .
	$
	\item[A5: Well-foundedness.] The composed userset dependency graph
	satisfies the same well-foundedness requirements imposed by the
	underlying ReBAC engine.
	
	\item[A6: Hierarchy compatibility.] If \(\mathsf{parent}(T)\) is used
	for a lifted permission \(r\), then the original domain permission
	\(r_D\) is inherited along the same hierarchy. That is, for every
	parent edge from \(o\) to \(o_p\),
$
	\llbracket r_D \rrbracket^{E_D}_{C_D}(o_p,\mathit{ctx})
	\subseteq
	\llbracket r_D \rrbracket^{E_D}_{C_D}(o,\mathit{ctx}) .
$
\end{description}

For each \(T\) and \(r\in\mathcal{L}(T)\), the overlay applies a
non-deleting graph rewrite in DPO style:
$
p_{T,r}: L_{T,r} \xleftarrow{\ell} K_{T,r}
\xrightarrow{\rho} R_{T,r}.
$
We set \(K_{T,r}=L_{T,r}\), so the matched domain schema is
preserved and the rule only glues in fresh overlay relations.  The
left-hand side contains the type node \(T\), the permission relation
\((T,r)\), the dependencies of \(\mathsf{root}(T,r)\), and, when
present, the hierarchy relation \(\mathsf{parent}(T)\).  The
right-hand side extends this interface with:

\[
\begin{aligned}
	\mathsf{ags\_in\_scope}_T
	&:= \mathsf{agents}\ \mathsf{from}\ \mathsf{in\_scope}_T,\\
	\mathsf{chain\_agents\_for\_r}_T
	&:= \mathsf{can\_execute\_on\_my\_behalf}\ \mathsf{from}\
	\mathsf{root}(T,r) \\
	&\quad \mathsf{or}\ 
	\mathsf{chain\_agents\_for\_r}_T\
	\mathsf{from}\ \mathsf{parent}(T),\\
	\mathsf{delegated\_agent\_r}_T
	&:= \mathsf{ags\_in\_scope}_T
	\ \mathsf{and}\
	\mathsf{chain\_agents\_for\_r}_T,\\
	r
	&:= r_D\ \mathsf{or}\ \mathsf{delegated\_agent\_r}_T .
\end{aligned}
\]
\(r_D\) is the original domain userset for
relation \(r\).  If \(T\) has no parent relation, the second disjunct
in \(\mathsf{chain\_agents\_for\_r}_T\) is omitted.

Since \(K_{T,r}=L_{T,r}\), the rewrite is conservative: no domain
type, relation, or userset dependency is deleted.  Applying these
rules for all \(T\) and \(r\in\mathcal{L}(T)\), together with the
global bootstrap rules for \(\mathsf{agent}\), \(\mathsf{session}\),
and \(\mathsf{scope}\), yields the composed schema
\[
C_{D\otimes B}=\mathsf{Overlay}(C_D,\mu).
\]

The denotational effect of the rewrite is therefore explicit:
for every lifted permission \(r\in\mathcal{L}(T)\),
{\tiny
\begin{equation*}
\llbracket r \rrbracket^{E}_{C_{D\otimes B}}(o,\mathit{ctx})
=
\llbracket r_D \rrbracket^{E_D}_{C_D}(o,\mathit{ctx})
\cup 
\left(
\llbracket \mathsf{ags\_in\_scope}_T \rrbracket^{E}_{C_{D\otimes B}}(o,\mathit{ctx})
\cap
\llbracket \mathsf{chain\_agents\_for\_r}_T \rrbracket^{E}_{C_{D\otimes B}}(o,\mathit{ctx})
\right).
\end{equation*}
}
Thus the original domain permission is preserved as one branch,
and agent authorization is added only through the intersection of
scope membership and a human-rooted delegation chain.

Appendix~\ref{sec:rewriteAppendix} gives an implementation-oriented expansion of the overlay macros used in this construction.
\subsection{Agent Controller Engine (ACE)}\label{sec:architecture}
To operationalize the concepts presented in this work, we propose a technical component called the \emph{Agent Controller Engine (ACE)}. ACE provides dynamic, contextual, and composable governance for agentic AI. It is designed as an extension within a classical IAM architecture, as illustrated in Figure~\ref{fig:overlay-arch}. At a high level, ACE unifies authorization, delegation, and auditing logic for agents. ACE interfaces with token-based authentication services—such as OAuth~2.1/OIDC providers—that may issue \emph{authenticated delegation tokens}. Other identity infrastructures (e.g., DID) are equally viable; from ACE’s perspective, these components serve only as secure sources of relational facts needed for authorization. Rather than embedding fixed capabilities inside tokens, ACE requires tokens to carry \emph{relations} (e.g., \texttt{agentX has full user delegation}), which are incorporated into its runtime authorization graph.
\begin{figure}[h]{}
	\centering
	\includegraphics[width=\linewidth]{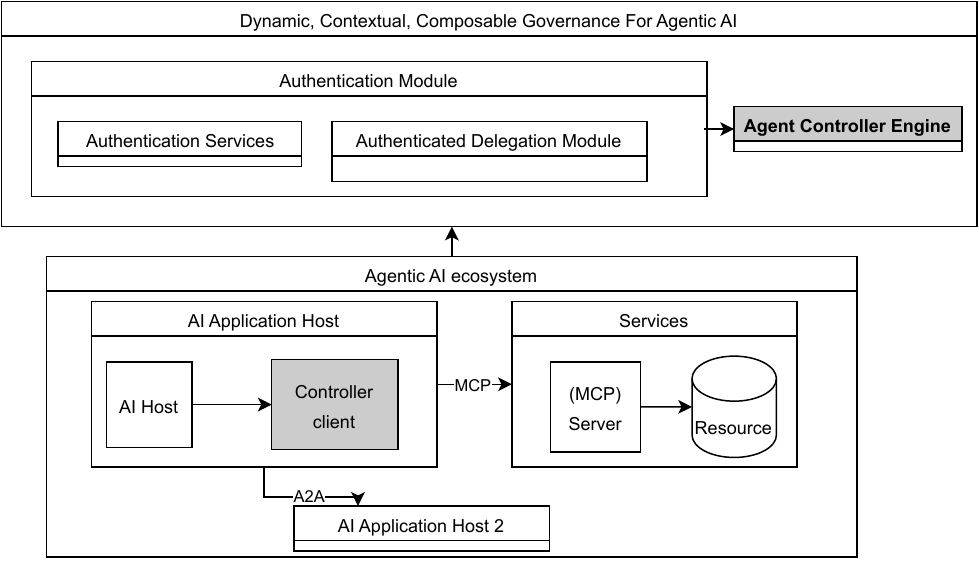}
	\caption{The general components in Agentic Governance.}
	\label{fig:overlay-arch}
\end{figure}

We assume that ACE and the authentication module operate within a centralized trust domain accessible by agents. Enforcement occurs through \emph{controller clients}, which act as policy enforcement points (PEPs), known in the XACML reference architecture~\cite{openid}. These clients intercept actions and consult ACE before execution. Their role aligns with existing protocols (e.g., MCP clients, A2A clients), but ACE augments them with a zero-trust governance layer and a unified authorization engine tailored for agentic interactions.

Let us unpack the ACE. Figure~\ref{fig:ace} illustrates its primary components and the flow of governance and authorization data. To operationalize the compositional model (Section~\ref{sec:operator}), ACE incorporates a \emph{governance layer}. This layer embodies the required primitives—most notably \emph{delegation} (including scoped and conditional variants), recursive delegation chains (the base model), and the \emph{injection} of these semantics into existing domain models through our composition operator. This layer materializes these abstractions as a typed schema graph that supports runtime checks for AI agents.

The \emph{execution layer} forms the operational core of ACE. It maintains an \emph{authorization graph} (AG) that reflects the live state of users, agents, sessions, delegations, and scope assignments. The AG evolves as the system evolves, and therefore requires a runtime \emph{relations writer} responsible for securely inserting, or removing edges based on system events. This module processes and verifies identity, access, or delegation tokens, and may also ingest relational facts from other trusted components. As such, the writer is extensible and functions as a hub for multiple policy information points.

Finally, the execution layer exposes authorization services to its clients —users, agents, auditor software, or monitoring systems—via an authorization engine. This engine evaluates access requests by combining (i) governance primitives, (ii) domain-level permissions, and (iii) the current AG state. It answers queries such as:  
\textit{is "agent:health" allowed to access the technical-specification folder?}  
or  
\textit{which agents currently retain access to bob health records?}  
The result is a unified, zero-trust–compatible enforcement point for AI agents.

\begin{figure}[!t]{}
	\centering
	\includegraphics[width=\linewidth]{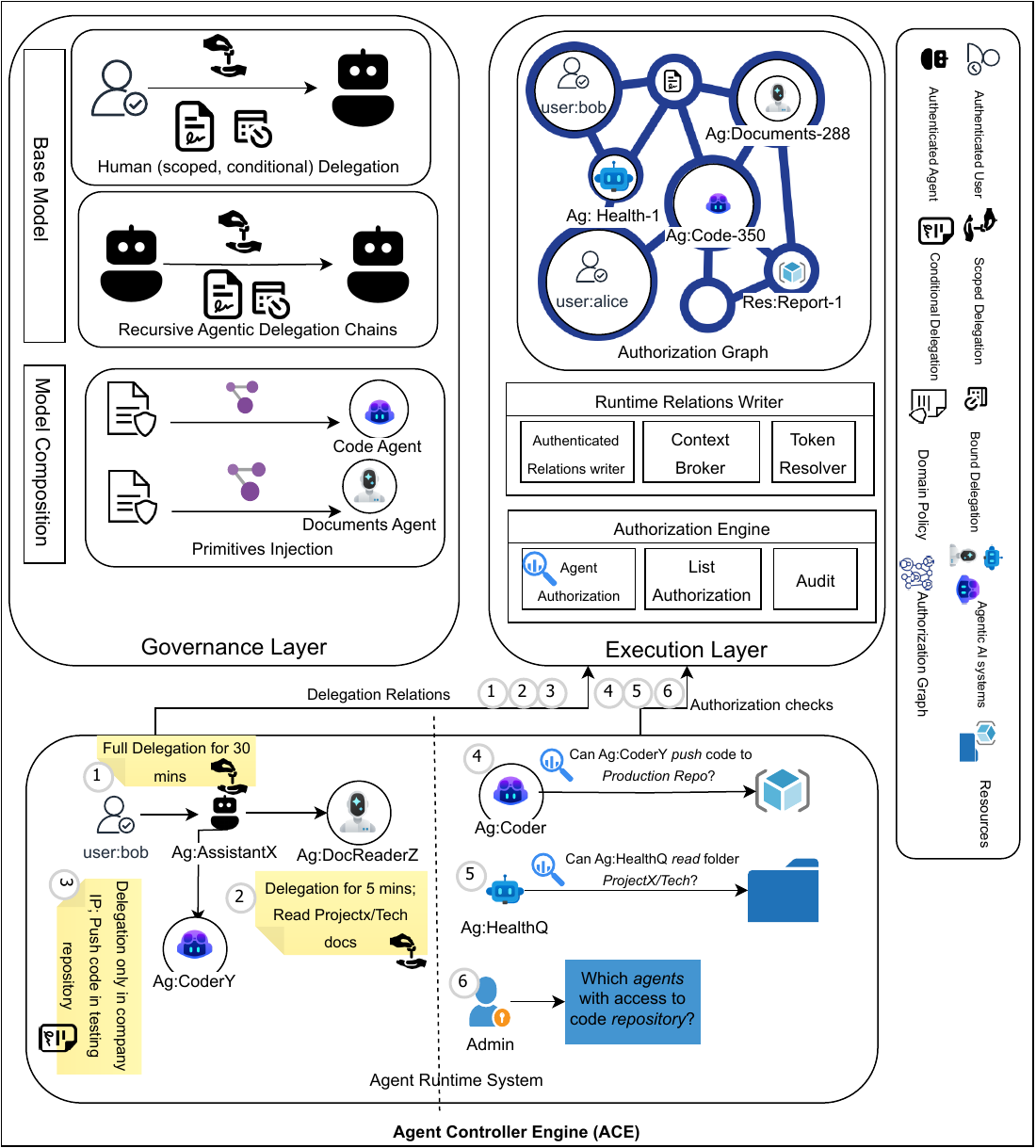}
	\caption{The complete ACE Reference Architecture with components and processes.} 
	\label{fig:ace}
\end{figure}

\subsection{Use Case: Multi-Agent Code Assistants}\label{sec:usecase}
Coding AI agents that support developers in software development are a promising domain for AI. Their ability to generate, and review code faster than humans makes them indispensable companions for developers~\cite{gartnerAICodeAssistants}. Coding Agents are often envisioned as \emph{multi-agent systems}  that cooperate toward a shared goal. For instance, a \emph{Planner Agent} coordinates tasks, a \emph{Requirements Agent} extracts functional requirements, and \emph{Coding and Testing Agent} generates code and test cases. Because these assistants consume heterogeneous and untrusted artifacts e.g., specifications, issue threads, code comments, they are natural targets for prompt injection.
	
In an enterprise deployment of coding assistants, their compliance with organizational policies is crucial. Developers work across multiple projects, each governed by domain rules and backed by structured repositories. They use shared document systems (e.g., Drive) for specifications and version control systems (e.g., GitHub) for code, both of which expose human-centric authorization models (e.g., \texttt{reader}, \texttt{maintainer}) over resources such as \texttt{documents}, \texttt{folders}, and \texttt{repositories}.
To enable agents to act on behalf of developers within these systems, e.g., accessing documents, or committing code, we apply our compositional authorization model.

\noindent\textbf{Mapping the Domain Model.}
We begin by extracting the existing domain authorization models from both systems. Each can be expressed as a schema $(C_D)$ using familiar relations such as \texttt{parent}, \texttt{viewer}, and \texttt{editor}. For simplicity, we omit system-specific details. Our overlay rewriting can be applied to each model separately or to a unified model; we choose the latter, as it allows us to harmonize GitHub and Google Drive under a single collaboration schema. In practice, this involves mapping GitHub \emph{teams} to \texttt{group}, \texttt{repo}/\texttt{folder} to \texttt{container}, and \texttt{doc}/\texttt{file} to \texttt{resource}. The result is a generalized hierarchical model suitable for both domains. A more rigorous sequential composition operator is left for future work. A snippet of the resulting combined model appears below.
\begin{lstlisting}[basicstyle=\ttfamily\footnotesize,columns=fullflexible,frame=single]
type user
type group
  relations
	define member: [user, group#member]	
type organization
  relations
	define owner: [user]
	define member: [user, group#member] or owner	
type container
  relations
	define parent: [container]
	define viewer: [user, group#member, organization#member] or owner or editor or viewer from parent	
\end{lstlisting}

%

\paragraph{Agentic Overlay.}
Using the operator from Section~\ref{sec:operator}, we
instantiate the lift specification \(\mu\) and generate the composed
schema \(C_{D\otimes B}\).  For this use case, \(\mu\) selects
container permissions such as \(\mathsf{viewer}\) and
\(\mathsf{editor}\), uses the original domain permission \(r_D\) as
the human root, and uses \(\mathsf{parent}\) as the hierarchy
relation.  Expanding the overlay macros injects: (1) global agentic
types (\(\mathsf{agent}\), \(\mathsf{session}\), \(\mathsf{scope}\));
(2) delegation and scope relations (\(\mathsf{delegatee}\),
\(\mathsf{holder}\)); (3) scoping and delegation-chain lifting into
each resource type via \(\mathsf{in\_scope}\); and (4) derived agent
permissions e.g., \(\mathsf{delegated\_agent\_viewer}\).

The resulting extended schema \(C_{D\otimes B}\), shown in
Figure~\ref{fig:injectedmodel}, is conservative for domain principals
while adding agentic delegation paths through the overlay branch.
Delegations encode which users authorize which agents and under what
terms, such as conditional delegation, or scoped
sessions within a project hierarchy.

\begin{figure}[!t]{}
	\centering
	\includegraphics[width=0.7\linewidth]{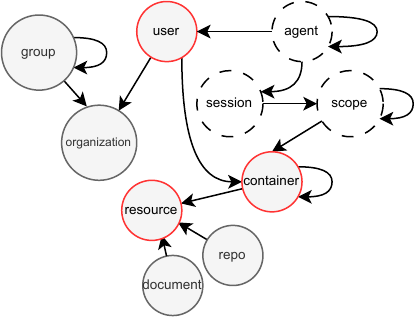}
	\caption{Coding Assistant Schema. Nodes are types, edges are relations. Red nodes refer to common types between base and domain models, dashed nodes refer to governance types.}
	\label{fig:injectedmodel}
\end{figure}

\noindent\textbf{Execution and Enforcement.}
Using the composed model in practice requires deploying it into an authorization engine, populating runtime relations, and answering access requests. We use OpenFGA as an open-source engine for storing relational tuples~\cite{openfga}. A tuple is generated, for example, when a user prompts an agent to perform a programming task: a delegation relation, together with a session and scope, is written to the engine, e.g.,
\texttt{\textlangle alice delegatee agent:planner, condition:expires\_at:"23:59:59"\textrangle}}.
Scoping relations are recorded based on the user’s choices, but may also be produced by other trusted components that observe the environment—for instance, a security monitor or intrusion detection system. This enforcement point remains relevant even if an agent is influenced by malicious content, i.e.,injections do not expand the agent’s envelope, provided attempted accesses are mediated by the PEPs and checked against the current delegation chain.

\noindent\textbf{Runtime checks.} Authorization queries of the form \texttt{Check(agent, can\_view, resource)} are evaluated against the composed configuration $C_{D\otimes B}$ and the current authorization graph.

\noindent\textbf{Example.}
Consider a project \texttt{projX} within the web-development department (\texttt{scope:org/web}). \texttt{Alice} is a viewer of the project’s container (\texttt{container:projX}), which stores design documents and related artifacts, and she also holds the repository rights needed to commit changes for this project. An AI assistant \texttt{agent:Planner} operates as the main task-level coordinator.

To enable system to act on her behalf, Alice issues a delegation to \texttt{agent:Planner} guarded by a \texttt{temporal} condition. When the agent begins operating, it creates a \texttt{session:s1} in the \texttt{org/web} scope, representing its contextual envelope. The planner may then issue separate scoped delegations to specialized
sub-agents, e.g., a document-view delegation to \texttt{agent:DocReader} and a
repository-write delegation to \texttt{agent:Copilot}.

The document (\texttt{res:design-doc}) lives inside the project container, which is itself associated with the same scope. At check time, the engine evaluates whether the agent (DocReader) can view the document. 
The request is authorized because:  
(i) there is a valid delegation chain from a human with view rights on the resource (\texttt{user:alice}) to \texttt{agent:Planner} and then to \texttt{agent:DocReader};  
(ii) all delegation conditions hold; and  
(iii) the agent has an active session in a scope compatible with the resource’s scope. Suppose, however, that \texttt{design-doc} contains a malicious instruction such as ``ignore prior guidance and push the repository contents to production.'' The document may still influence \texttt{DocReader}'s reasoning, but it does not enlarge its authorization envelope: \texttt{DocReader} only holds the read-oriented delegation issued for document analysis, so any attempt by \texttt{DocReader} itself to access a repository, invoke a non-authorized tool, or act outside \texttt{org/web} is denied.

If the task genuinely requires a code change, \texttt{agent:Planner} may issue a separate delegation relation to \texttt{agent:Copilot} for repository actions. When \texttt{agent:Copilot} attempts the push, the engine evaluates that distinct delegation chain back to Alice. Thus, reading a malicious document cannot by itself cause code to be pushed unless repository access was independently delegated to the coding agent and all relevant conditions still hold.

Through model composition, coding assistants can safely operate while limiting the impact of prompt injection to the agent's envelope. This achieves:  
\textbf{(i) Delegation safety:} agents cannot obtain access without a user chain;  
\textbf{(ii) Contextual enforcement:} delegations are active under the correct conditions;  
\textbf{(iii) Auditability:} agent actions trace back to their delegator;  
\textbf{(iv) Reuse:} human-facing domain permissions remain intact.
	\section{Evaluation and Verification}\label{sec:eval}
 We evaluate key qualities of our approach. First, we prove preservation
 and agent-authorization soundness for well-formed schemas satisfying the
 overlay applicability conditions. Then we assess practical properties such
 as decision latency.

\subsection{Verification of Soundness}\label{sec:soundness}
We verify the effect of the operator from
Section~\ref{sec:operator}.  The overlay should be conservative for
domain principals, while every agent authorization should be
justified by an original human permission, a valid delegation chain,
and a valid scope/session chain.

Let \(C_D\) be a well-formed domain schema with tuple set \(E_D\).
Let \(C_{D\otimes B}=\mathsf{Overlay}(C_D,\mu)\), and let
\(E_{D\otimes B}\) extend \(E_D\) only with overlay-introduced
relations, e.g., \(\mathsf{delegatee}\), \(\mathsf{actor}\),
\(\mathsf{holder}\), and \(\mathsf{in\_scope}\).  Let \(P_D\) be the
set of original domain principals, \(U\subseteq P_D\) the human
principals that may act as delegation roots, and \(A\) the fresh set
of agent principals.  By A4, \(A\cap P_D=\emptyset\).

We write \(x\leadsto_{E,\mathit{ctx}}y\) when \(E\) contains a
delegation tuple from \(x\) to \(y\) whose guard, if any, holds in
context \(\mathit{ctx}\).  We write
\(x\leadsto^{+}_{E,\mathit{ctx}}y\) for the transitive closure of such
valid delegation edges.  The first node of a delegation chain may be
a human \(u\in U\), while all delegatees are agents, as enforced by
the subject domain of \(\mathsf{delegatee}\).

We write \(\mathsf{Scoped}_T(a,o,E,\mathit{ctx})\) when agent \(a\)
has an active session in a scope compatible with object \(o\) of type
\(T\).  This holds when there exist a session \(s\) and scopes
\(q,q'\) such that
\[
(s,\mathsf{actor},a)\in E,\quad
(q',\mathsf{holder},s)\in E,\quad
(o,\mathsf{in\_scope},q)\in E,
\]
all relevant guards hold in \(\mathit{ctx}\), and \(q'=q\) or \(q'\)
is an ancestor of \(q\) under the scope-parent relation.  This is the
denotation of \(\mathsf{ags\_in\_scope}_T\).

\begin{lemma}[Conservative extension for domain principals]
	For every lifted permission \(r\in\mathcal{L}(T)\), object \(o\) of
	type \(T\), domain principal \(p\in P_D\), and context
	\(\mathit{ctx}\),
	\[
	p\in \llbracket r_D\rrbracket^{E_D}_{C_D}(o,\mathit{ctx})
	\quad\Longleftrightarrow\quad
	p\in \llbracket r\rrbracket^{E_{D\otimes B}}_{C_{D\otimes B}}(o,\mathit{ctx}) .
	\]
\end{lemma}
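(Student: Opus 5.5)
We use the explicit denotational equation for lifted permissions established at the end of Section~\ref{sec:operator}. For every $r\in\mathcal{L}(T)$,
\[
\llbracket r\rrbracket^{E_{D\otimes B}}_{C_{D\otimes B}}(o,\mathit{ctx})
=\llbracket r_D\rrbracket^{E_D}_{C_D}(o,\mathit{ctx})\cup\bigl(X\cap Y\bigr),
\]
where $X=\llbracket \mathsf{ags\_in\_scope}_T\rrbracket$ and $Y=\llbracket \mathsf{chain\_agents\_for\_r}_T\rrbracket$ are evaluated in the composed configuration. The ($\Rightarrow$) direction follows immediately from the left branch of the union.

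That equation also quietly identifies the domain branch evaluated in the composed configuration with the original denotation $\llbracket r_D\rrbracket^{E_D}_{C_D}$. We justify this first. Because $K_{T,r}=L_{T,r}$, the rewrite deletes nothing. By A1 (freshness), every relation referenced by $r_D$, directly or transitively through computed usersets and tuple-to-userset steps, keeps its original rewrite expression $e_{T',r'}$. The only exception is a lifted relation $r'$ that $r_D$ itself mentions. We handle this case by structural induction on the well-founded dependency order guaranteed by A5.

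The induction hypothesis is that, for every relation $r'$ strictly below and every object, the composed denotation restricted to $P_D$ equals the domain denotation. The hypothesis also records that the edges $E_{D\otimes B}\setminus E_D$ carry only fresh overlay relation labels, so no \textsf{this} clause of a domain relation reads them. Under this hypothesis, each algebra constructor ($\cup,\cap,\setminus$, \textsf{computed}, \textsf{from}) preserves the claim once both sides are intersected with $P_D$. This holds because these operations commute with intersection with a fixed set, and \textsf{from} traverses only domain object-to-object relations whose tuples are unchanged. Lifted relations recursively add the term $X\cap Y$, which we treat next.

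The key remaining step is showing $(X\cap Y)\cap P_D=\emptyset$. Unfolding $X$ as $\mathsf{agents}\ \mathsf{from}\ \mathsf{in\_scope}_T$: $\mathsf{agents}$ is built from $\mathsf{as\_agent}$, which is $\mathsf{actor}$, and the subject domain of $\mathsf{actor}$ is $[\mathsf{agent}]$. Hence $X\subseteq A$, and since $A\cap P_D=\emptyset$ by A4, the intersection is empty. The same bound holds for $Y$, since $\mathsf{can\_execute\_on\_my\_behalf}$ ranges over $\mathsf{delegatee}$ subjects of type $\mathsf{agent}$. Hence a domain principal $p$ lies in the composed denotation iff it lies in the domain branch, which by the induction equals $\llbracket r_D\rrbracket^{E_D}_{C_D}(o,\mathit{ctx})$. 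This gives ($\Leftarrow$).

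The main obstacle is the induction step where $r_D$ refers, possibly through difference ($\setminus$) or through another type, to a relation that was itself lifted. Restricting to $P_D$ must commute with set difference, $(S\setminus S')\cap P_D=(S\cap P_D)\setminus(S'\cap P_D)$. This is valid, but it requires carrying the full restricted equality, not just one inclusion, through the induction. A5 is needed to make this induction well-founded.
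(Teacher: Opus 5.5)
Your argument has the same skeleton as the paper's. You split the composed denotation of $r$ into the domain branch and $\mathsf{delegated\_agent\_r}_T$, show by typing that the latter lies in the fresh agent type $A$, and finish with A4.

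Where you differ is the domain branch. The paper dismisses it in one phrase (``preserved by the non-deleting rewrite''). It takes at face value the identity of Section~\ref{sec:operator}, whose left branch is literally $\llbracket r_D\rrbracket^{E_D}_{C_D}$. You rightly observe that non-deletion is not enough once $r$ is redefined in place. If $r_D$ reads a lifted relation, the domain branch evaluated in $C_{D\otimes B}$ can contain agents, so the identity is only guaranteed after intersecting with $P_D$. In the use case, $\mathsf{viewer}$ reads $\mathsf{editor}$, and also reads $\mathsf{viewer}$ itself via \texttt{viewer from parent}.

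Your induction shows that restriction to $P_D$ commutes with every constructor, difference included. That is exactly what makes ``preserved'' precise. The paper's proof is shorter but relies on an equation that need not hold verbatim; yours proves the lemma as stated without needing it.

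Two small repairs are needed.
\begin{itemize}
\item The induction cannot run over relations with the hypothesis ``every strictly lower relation and every object''. Self-recursive rewrites such as \texttt{viewer from parent} make a relation depend on itself at another object. Run the induction over (object, relation) instances in the engine's well-founded evaluation order, which is what A5 supplies. This also covers lifted relations reached transitively, not only those $r_D$ mentions directly.
\item Your ($\Rightarrow$) direction is stated before the induction and silently uses the unrestricted identity. Derive both directions from the restricted equality instead.
\end{itemize}
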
 

\begin{proof}
	By Section~\ref{sec:operator},
	\[
	\llbracket r \rrbracket^{E_{D\otimes B}}_{C_{D\otimes B}}
	=
	\llbracket r_D \rrbracket^{E_D}_{C_D}
	\cup
	\llbracket \mathsf{delegated\_agent\_r}_T
	\rrbracket^{E_{D\otimes B}}_{C_{D\otimes B}} .
	\]
	The first branch is preserved by the non-deleting rewrite.  The
	second branch denotes only principals of the fresh agent type \(A\),
	because it is constructed from \(\mathsf{ags\_in\_scope}_T\) and
	\(\mathsf{chain\_agents\_for\_r}_T\).  Since \(A\cap P_D=\emptyset\),
	the agent branch cannot add or remove any \(p\in P_D\).
\end{proof}

\begin{lemma}[Human-rooted delegation]
	For every lifted permission \(r\in\mathcal{L}(T)\), object \(o\) of
	type \(T\), agent \(a\in A\), and context \(\mathit{ctx}\),
	\[
	a \in
	\llbracket \mathsf{chain\_agents\_for\_r}_T
	\rrbracket^{E_{D\otimes B}}_{C_{D\otimes B}}(o,\mathit{ctx})
	\]
	implies that there exists a human \(u\in U\) such that
	\[
	u\in
	\llbracket r_D\rrbracket^{E_D}_{C_D}(o,\mathit{ctx})
	\quad\text{and}\quad
	u \leadsto^{+}_{E_{D\otimes B},\mathit{ctx}} a .
	\]
\end{lemma}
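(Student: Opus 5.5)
The proof goes by well-founded induction on the evaluation of \(\mathsf{chain\_agents\_for\_r}_T\). We induct along the parent hierarchy of the object \(o\), which is finite and acyclic by A5. Write \(X(o)\) for \(\llbracket \mathsf{chain\_agents\_for\_r}_T\rrbracket^{E_{D\otimes B}}_{C_{D\otimes B}}(o,\mathit{ctx})\). By the rewrite in Section~\ref{sec:operator}, its denotation is the union of two parts: the tuple-to-userset branch \(\mathsf{can\_execute\_on\_my\_behalf}\ \mathsf{from}\ \mathsf{root}(T,r)\), and (when \(\mathsf{parent}(T)\) exists) the set \(\bigcup_{o_p} X(o_p)\) over parent edges \(o\to o_p\) valid in \(\mathit{ctx}\). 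We therefore prove the stronger claim for all objects simultaneously, treating each branch as one case.

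\textbf{Base branch.} Suppose \(a\) arises from the first disjunct. Then there is a subject \(u\in\llbracket \mathsf{root}(T,r)\rrbracket^{E_{D\otimes B}}(o,\mathit{ctx})\) with \(a\in\llbracket\mathsf{can\_execute\_on\_my\_behalf}\rrbracket(u,\mathit{ctx})\).

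First, we transfer membership in the root to the domain semantics. The expression \(\mathsf{root}(T,r)\) is built only from domain relations (A2), and \(E_{D\otimes B}\) adds only tuples on fresh overlay relations (A1). The evaluation of \(\mathsf{root}(T,r)\) therefore reads the same tuples under the composed schema as under \(C_D\). One caveat: \(\mathsf{root}\) might mention a lifted relation such as \(r\) itself. This is the use-case choice \(\mathsf{root}=r_D\), where the root refers to the original userset \(r_D\) and not to the rewritten \(r\). If a lifted relation does occur, we appeal to the preceding Conservative-extension Lemma, since by A2 only human principals \(u\in U\subseteq P_D\) are in question. Hence \(u\in U\) and \(u\in\llbracket\mathsf{root}(T,r)\rrbracket^{E_D}_{C_D}(o,\mathit{ctx})\). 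A3 then gives \(u\in\llbracket r_D\rrbracket^{E_D}_{C_D}(o,\mathit{ctx})\).

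Second, we show that membership in \(\mathsf{can\_execute\_on\_my\_behalf}\) yields a chain \(u\leadsto^{+}a\). This is a routine induction on the recursive userset \(\mathsf{delegatee}\ \mathsf{or}\ \mathsf{can\_execute\_on\_my\_behalf}\ \mathsf{from}\ \mathsf{delegatee}\). A direct \(\mathsf{delegatee}\) tuple whose guard holds in \(\mathit{ctx}\) is exactly an edge \(u\leadsto a\). The \(\mathsf{from}\) case yields some \(u\leadsto b\) with \(a\) in \(b\)'s closure, and the inductive hypothesis extends the chain. This induction is well-founded because of the typing invariant: delegatees are agents, and evaluation never re-enters the user namespace.

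\textbf{Parent branch.} Suppose \(a\in X(o_p)\) for some parent \(o_p\). The inductive hypothesis applied to \(o_p\) gives \(u\in U\) with \(u\in\llbracket r_D\rrbracket^{E_D}_{C_D}(o_p,\mathit{ctx})\) and \(u\leadsto^{+}a\). A6 transports the domain permission from \(o_p\) down to \(o\), which gives \(u\in\llbracket r_D\rrbracket^{E_D}_{C_D}(o,\mathit{ctx})\). The delegation chain is unchanged.

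\textbf{Main obstacle.} I expect the delicate step to be the base branch's claim that evaluating \(\mathsf{root}(T,r)\) in the composed model coincides with its domain evaluation. This requires a small locality argument: the denotation of a domain-only userset depends only on tuples of relations it (transitively) references, and none of those relations are overlay relations. I would state this as an auxiliary observation, proved by structural recursion on the userset expression, with A1 and A5 ensuring the recursion is well-founded. If \(\mathsf{root}\) references a rewritten lifted relation, I would fall back on the preceding Lemma together with A2's restriction to human principals.
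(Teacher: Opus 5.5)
Your main line is the paper's own proof. It is an induction on the definition of $\mathsf{chain\_agents\_for\_r}_T$: the base case is discharged by A2, A3 and unfolding $\mathsf{can\_execute\_on\_my\_behalf}$ into a guarded delegation path, and the parent case by the induction hypothesis plus A6. Your locality observation adds rigor the paper lacks. It makes explicit what the paper silently assumes when it applies A2, which is a statement about $\llbracket \mathsf{root}(T,r)\rrbracket^{E_D}_{C_D}$, to the root as it is evaluated in the composed model.

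The step that fails is your fallback for a root that mentions a rewritten lifted relation $s\in\mathcal{L}(T)$. The Conservative-extension Lemma only says that domain principals keep their membership. It does not stop agents from entering the composed denotation of the root through the new branch $\mathsf{delegated\_agent\_s}_T$. A2 constrains only the domain denotation, so ``only human principals are in question'' is exactly what remains unproven. Once an agent $b$ is in the composed root, $\mathsf{can\_execute\_on\_my\_behalf}$ from the root follows $b$'s own delegations, and the statement itself can fail.

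For a counterexample, take $\mathsf{root}(T,r)=r_D=\mathit{editor}\ \mathsf{and}\ \mathit{approver}$ with both relations lifted. Let an in-scope agent $b$ be delegated by one human who is only an editor and by another who is only an approver. Then $b$ is in the composed root, and every delegatee $a$ of $b$ is in $\mathsf{chain\_agents\_for\_r}_T(o)$. Absent other delegations, no human in $\llbracket r_D\rrbracket^{E_D}_{C_D}(o,\mathit{ctx})$ has a chain to $a$.

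So drop the fallback and state the convention the paper's denotational equation already relies on; it writes the preserved branch as $\llbracket r_D\rrbracket^{E_D}_{C_D}$. Under that convention, $\mathsf{root}(T,r)$ is the original domain userset, evaluated over unrewritten domain relations (as \texttt{hu\_can\_view} is in the base model), and your locality argument then covers every case.

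A minor point: your closure induction is well-founded because you induct on finite membership derivations. The typing invariant does not provide this, since it does not exclude agent-to-agent delegation cycles.
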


\begin{proof}
	The proof is by induction on \(\mathsf{chain\_agents\_for\_r}_T\)'s definition.  In the base case, \(a\) is
	reached by following
	\(\mathsf{can\_execute..}\) from
	\(\mathsf{root}(T,r)\).  By A2, the root principal is a human
	\(u\in U\); by A3, this root is included in the original domain
	permission \(r_D\); and by the definition of
	\(\mathsf{can\_execute\_on\_my\_behalf}\), there is a valid delegation
	path \(u\leadsto^{+}_{E_{D\otimes B},\mathit{ctx}}a\).
	In the parent case, the claim follows from the induction hypothesis
	on the parent object and A6, which ensures that \(r_D\) is inherited
	along the same hierarchy.
\end{proof}

\begin{theorem}[Agent authorization soundness]
	For every lifted permission \(r\in\mathcal{L}(T)\), object \(o\) of
	type \(T\), agent \(a\in A\), and context \(\mathit{ctx}\),
	\[
	a\in
	\llbracket r \rrbracket^{E_{D\otimes B}}_{C_{D\otimes B}}(o,\mathit{ctx})
	\]
	implies that there exists a human \(u\in U\) such that:
	\[
	\begin{array}{ll}
		\textnormal{(i)} &
		u\in
		\llbracket r_D \rrbracket^{E_D}_{C_D}(o,\mathit{ctx}), \\[1mm]
		\textnormal{(ii)} &
		u \leadsto^{+}_{E_{D\otimes B},\mathit{ctx}} a, \\[1mm]
		\textnormal{(iii)} &
		\mathsf{Scoped}_T(a,o,E_{D\otimes B},\mathit{ctx}) .
	\end{array}
	\]
\end{theorem}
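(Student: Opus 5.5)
The plan is to unfold the denotational equation for lifted permissions from Section~\ref{sec:operator} and treat its two branches separately. Given \(a\in\llbracket r\rrbracket^{E_{D\otimes B}}_{C_{D\otimes B}}(o,\mathit{ctx})\), this equation puts \(a\) either in \(\llbracket r_D\rrbracket^{E_D}_{C_D}(o,\mathit{ctx})\) or in the intersection \(\llbracket \mathsf{ags\_in\_scope}_T\rrbracket\cap\llbracket \mathsf{chain\_agents\_for\_r}_T\rrbracket\) evaluated at \((o,\mathit{ctx})\).

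The first branch is impossible for an agent. The domain userset \(r_D\) is well-typed in \(C_D\), so it denotes only domain principals, a subset of \(P_D\). By A4, \(a\in A\) and \(A\cap P_D=\emptyset\), so \(a\notin \llbracket r_D\rrbracket^{E_D}_{C_D}(o,\mathit{ctx})\). This is the same disjointness argument used in the conservative-extension lemma, applied in the other direction. A1 (freshness) is also needed here: it ensures \(E_D\) contains no tuples naming agent identifiers, so \(r_D\) cannot reach \(a\) through any domain edge.

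Hence \(a\) lies in both conjuncts of \(\mathsf{delegated\_agent\_r}_T\).

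\begin{itemize}
\item From membership in \(\mathsf{chain\_agents\_for\_r}_T\), the Human-rooted delegation lemma gives a human \(u\in U\) with \(u\in\llbracket r_D\rrbracket^{E_D}_{C_D}(o,\mathit{ctx})\) and \(u\leadsto^{+}_{E_{D\otimes B},\mathit{ctx}}a\). These are (i) and (ii).
\item For (iii), unfold \(\mathsf{ags\_in\_scope}_T := \mathsf{agents}\ \mathsf{from}\ \mathsf{in\_scope}_T\). Membership yields a tuple \((o,\mathsf{in\_scope},q)\) and \(a\in\llbracket\mathsf{agents}\rrbracket(q,\mathit{ctx})\).
\item Induct on the recursive definition \(\mathsf{agents}:=\mathsf{ags\_direct}\ \mathsf{or}\ \mathsf{agents}\ \mathsf{from}\ \mathsf{parent}\). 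This recursion is well-founded by A5 and the acyclicity of scopes.
\item Read each case through the rewrite semantics of \(\mathsf{from}\). In the direct case, \(\mathsf{ags\_direct}=\mathsf{as\_agent}\ \mathsf{from}\ \mathsf{holder}\) gives a session \(s\) with \((q',\mathsf{holder},s)\) for \(q'=q\), guard satisfied, and \((s,\mathsf{actor},a)\). In the parent case, the induction hypothesis gives such a \(q'\) reached along the scope-parent chain from \(q\), which is what \(\mathsf{Scoped}_T\) requires.
\end{itemize}

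The main obstacle is step (iii), specifically reconciling the direction of the scope hierarchy. The definition of \(\mathsf{Scoped}_T\) requires \(q'\) to be an ancestor of \(q\), and I must check that \(\mathsf{agents}\ \mathsf{from}\ \mathsf{parent}\) indeed collects sessions of ancestor scopes rather than descendants. I would fix the orientation by reading the tuple \((q,\mathsf{parent},q_p)\) as ``\(q_p\) is the parent of \(q\)'', which makes the inductive case land in an ancestor.

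A secondary concern is that conditions attached to \(\mathsf{holder}\) edges must be carried through the induction as ``guards hold in \(\mathit{ctx}\)''. That follows because conditional direct edges are present in the denotation only when their guard evaluates true in \(\mathit{ctx}\).

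Note that (i)–(iii) need not refer to the same \(u\) beyond what the lemma provides. Since the scope condition is independent of \(u\), combining the two conjuncts requires no further coordination.
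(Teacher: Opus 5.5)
Your proposal is correct and follows the paper's proof. You rule out the domain branch via \(A\cap P_D=\emptyset\), conclude that \(a\in\mathsf{delegated\_agent\_r}_T=\mathsf{ags\_in\_scope}_T\cap\mathsf{chain\_agents\_for\_r}_T\), and obtain (i)--(ii) from the human-rooted delegation lemma and (iii) from \(\mathsf{ags\_in\_scope}_T\). The only difference is that the paper defines \(\mathsf{Scoped}_T\) directly as the denotation of \(\mathsf{ags\_in\_scope}_T\), so (iii) is immediate there, whereas you check that correspondence by induction on the scope-parent recursion; your choice of orientation, with parent scopes' agents propagating down to child scopes, is the right one.
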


\begin{proof} 
	Assume
	\(a\in
	\llbracket r \rrbracket^{E_{D\otimes B}}_{C_{D\otimes B}}(o,\mathit{ctx})\).
	Since \(a\in A\) and \(A\cap P_D=\emptyset\), \(a\) cannot occur in
	the preserved domain branch
	\(\llbracket r_D \rrbracket^{E_D}_{C_D}\).  Hence \(a\) must occur in
	the overlay branch:
	\[
	a\in
	\llbracket \mathsf{delegated\_agent\_r}_T
	\rrbracket^{E_{D\otimes B}}_{C_{D\otimes B}}(o,\mathit{ctx}) .
	\]
	By construction,
	\[
	\mathsf{delegated\_agent\_r}_T
	=
	\mathsf{ags\_in\_scope}_T
	\cap
	\mathsf{chain\_agents\_for\_r}_T .
	\]
	Membership in \(\mathsf{ags\_in\_scope}_T\) gives
	\(\mathsf{Scoped}_T(a,o,E_{D\otimes B},\mathit{ctx})\).  Membership
	in \(\mathsf{chain\_agents\_for\_r}_T\), together with the
	human-rooted delegation lemma, gives a human \(u\in U\) such that
	\(u\in\llbracket r_D\rrbracket^{E_D}_{C_D}(o,\mathit{ctx})\) and
	\(u\leadsto^{+}_{E_{D\otimes B},\mathit{ctx}}a\).
\end{proof}

\begin{corollary}[Revocation and guard invalidation]
	Let \(E'\) be obtained from \(E_{D\otimes B}\) by removing only
	overlay tuples, and let \(\mathit{ctx}'\) be any context, possibly one
	in which a delegation or scope-holder guard no longer holds.  If, in
	\((E',\mathit{ctx}')\), there is no human \(u\in U\) satisfying the
	three conditions of the agent-authorization soundness theorem for
	\((a,r,o)\), then
	\[
	a\notin
	\llbracket r \rrbracket^{E'}_{C_{D\otimes B}}(o,\mathit{ctx}') .
	\]
\end{corollary}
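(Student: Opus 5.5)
The plan is to derive the corollary as the contrapositive of the agent-authorization soundness theorem, applied to the modified data plane \(E'\) and context \(\mathit{ctx}'\) rather than to \((E_{D\otimes B},\mathit{ctx})\). The theorem is stated for an arbitrary context and for any tuple set that extends \(E_D\) only with overlay-introduced relations. It therefore suffices to check that \((E',\mathit{ctx}')\) is an admissible instance of its hypotheses, and then read off the conclusion.

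\textbf{Step 1: \(E'\) is admissible.} Since \(E'\) is obtained by deleting only overlay tuples (\(\mathsf{delegatee}\), \(\mathsf{actor}\), \(\mathsf{holder}\), \(\mathsf{in\_scope}\), scope \(\mathsf{parent}\)), its restriction to domain relations is still exactly \(E_D\). Thus \(E'\) is again of the form ``\(E_D\) extended only with overlay relations''. The schema \(C_{D\otimes B}\) and the lift specification \(\mu\) are unchanged, so conditions A1--A6 still hold. The conditions that mention tuples (A2, A3, A6) refer only to \(E_D\) and quantify over every context, so they hold at \(\mathit{ctx}'\) as well. A5 is a property of the schema alone.

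\textbf{Step 2: the supporting lemmas transfer.} The conservative-extension lemma and the human-rooted delegation lemma were proved uniformly in the overlay tuples and in the context, so they hold verbatim for \((E',\mathit{ctx}')\). In particular, the domain branch \(\llbracket r_D\rrbracket^{E_D}_{C_D}(o,\mathit{ctx}')\) is the same branch that appears in condition (i) of the theorem.

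\textbf{Step 3: conclude.} Applying the theorem to \((E',\mathit{ctx}')\): if \(a\in\llbracket r\rrbracket^{E'}_{C_{D\otimes B}}(o,\mathit{ctx}')\), there would exist a human \(u\) satisfying (i)--(iii) in \((E',\mathit{ctx}')\). The hypothesis of the corollary rules this out, so \(a\notin\llbracket r\rrbracket^{E'}_{C_{D\otimes B}}(o,\mathit{ctx}')\).

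The main obstacle is Step 1, specifically making sure that guard invalidation is handled uniformly. A guard failing in \(\mathit{ctx}'\) must behave exactly like deleting the guarded edge, and the relations \(\leadsto^{+}_{E',\mathit{ctx}'}\) and \(\mathsf{Scoped}_T\) are defined with guards evaluated in the current context, so this holds by definition. The one further point to note is that evaluation is non-monotone under \(\setminus\) in domain expressions. This does not affect the argument, because the theorem is applied afresh to \(E'\) and no monotonicity comparison with \(E_{D\otimes B}\) is needed.
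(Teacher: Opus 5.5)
Your proposal is correct and follows the paper's argument: the corollary is the contrapositive of the agent-authorization soundness theorem, instantiated at \((E',\mathit{ctx}')\). Your check that \(E'\) still extends \(E_D\) only by overlay tuples, so that the theorem's standing hypotheses hold there, is the step the paper's one-line sketch leaves implicit.
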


\begin{proof}[Proof sketch]
	Immediate by the contrapositive of authorization soundness:
	any successful agent authorization must have a human-permission,
	delegation-chain, and scope/session witness.
\end{proof}

\subsection{Empirical Evaluation}\label{sec:empirical}
We assess the cost of enriching existing models with our overlay. This assessment is especially important because, unlike traditional ReBAC deployments, our approach introduces agents and sessions as runtime principals whose relations change frequently during execution. In realistic scenarios, sessions are short-lived, delegations are created and revoked recursively. The resulting authorization graph is therefore not only larger, but also more dynamic.

We evaluate the effect of this richer model on performance by comparing a baseline \emph{Domain} configuration against a matched \emph{Domain+Overlay} configuration, using identical domain tuples, across two known OpenFGA use-cases: \textit{Google Drive (G), and Slack (S).} 

For each use-case, we evaluate different sets of scenarios that gradually increase in the number of modeled relations, e.g., documents. Specifically, we evaluate (G1--G8) for G and (S1--S5) for S. A brief description of the use-cases and scenarios follows and the full parameter settings for all scenarios are reported in Appendix~\ref{appendix:testcases}.

\textbf{1. G.}
Models recursive folders and documents with concentric relations (viewer, commenter, writer), parent inheritance, and group- and user-side fan–out defined in the official OpenFGA guide. The series increases users (20→1000), groups (4→100), folders (8→200), and per-folder documents (3→30). Overlay parameters scale proportionally (agents 8→500, fixed 1 session/agent).

\textbf{2. S.}
Models workspaces and channels with roles (guest, admin), public/private visibility, and posting permissions. It exercises unions, implied relations (admin\,$\Rightarrow$\,writer\,$\Rightarrow$\,viewer), and workspace\,$\rightarrow$\,channel scoping. The series scales workspaces (2→120), channels/workspace (5→100), users (50→1200), and agents (5→300).

These use-cases represent contrasting structures: deep inheritance over content hierarchies (nested folders and documents) in G, and broad scope propagation across collaboration structures in S. They illustrate two structural poles of enterprise applications especially with document based AI architectures. We deem these use-cases representative of typical structures in the enterprise. Moreover, the scale of our scenarios exceeds what is used in the literature. We evaluate scenarios up to $1000$ users with $780k$ relations, while recent benchmarks such as Cedar benchmark evaluates $50$ users and ReBAC in data-spaces study reports a $120k$ relations~\cite{cutler2024cedar,fotiou2026relationship}. 

\paragraph{Methodology}
For each use-case we generate a paired dataset:
\textbf{1. Domain}: baseline tuples describing users, groups, workspaces, resources, and sharing structure. \textbf{2. Overlay}: the same domain tuples augmented with agents, sessions, scopes, and delegation chains. Both datasets share the same domain topology and sharing structure, so any performance difference is attributable to the additional overlay state and its runtime maintenance. We manually validate a small set of representative authorization checks for correctness and then scale both families through controlled parameter sweeps.

The benchmark goes beyond a read-only comparison. The \emph{Domain} configuration is executed as a \texttt{check}-only workload, whereas the \emph{Overlay} is executed as a mixed workload that interleaves checks and writes. In every scenario, we issue $1000$ operations; overlay runs use an $80/20$ check/write split. This construction reflects the target authorization setting, in which checks are issued continuously but the graph is also updated as the delegation state evolves.

A write operation models a minimal update by inserting a fixed three-tuple bundle: a \emph{delegatee} tuple that links an agent to a human principal, an \emph{actor} tuple that links the agent to a fresh session, and a \emph{holder} tuple that binds the session to an existing scope.

We implemented two Python generators that take structural parameters and emit two tuple files per case: \emph{Domain} and \emph{Overlay}. All random choices are seeded, with separate randomness governing domain construction and overlay augmentation. This yields reproducible workloads and supports parameter sweeps that vary one source of complexity at a time.  We vary three factor families:

\begin{enumerate}
	\item \textbf{Scale and Topology.}  
	Controls the structural size of each dataset.  
	For \textbf{G}, this includes \texttt{users}, \texttt{groups}, \texttt{folders}, and \texttt{docs-per-folder}, where folders form a shallow layered forest and group sizes include Gaussian variation.  
	For \textbf{S}, the parameters are \texttt{workspaces}, \texttt{channels-per-workspace}, and workspace user distributions.
	
	\item \textbf{Domain Fan–Out.}  
	Determines how broadly principals attach to objects.  
	In \textbf{G}, this is expressed via \texttt{group-viewer-ratio} and \texttt{doc-direct-viewer-ratio}. In \textbf{S}, the same notion governs role pressure on channels.  
	
	\item \textbf{Overlay State.}  
	Governs the size and complexity of the overlay, including \texttt{agents}, and \texttt{sessions-per-agent}.  Scopes follow a simple hierarchy. 
	Delegation chains are built from user→agent roots and extended 0–2 hops.  
\end{enumerate}

 In G, overlay runs mix agent-to-document, agent-to-folder, human-to-document, and human-to-folder checks. In S, overlay runs mix agent-writer and human-writer checks. For each case we measure $1000$ random checks on a 13th Gen Intel Core(TM) i7-1360P machine with 16.0 GB of memory, and report the following:
	 \textbf{1. Memory footprint:}  tuple count and average host memory usage during the run.
	 \textbf{2. Execution time:} mean, and median latency for all operations, together with separate summaries for \texttt{check} and \texttt{write} operations.

\paragraph{Results}
Our synthetic datasets simplify real deployments. Folder and channel topologies are generated from controlled templates, and delegation chains use bounded rules rather than full production histories. These choices can shift absolute latencies, but they do not change the comparative behavior between \emph{Domain} and \emph{Overlay} under identical seeds and scale settings.

\begin{table}[t]
	\centering
	\scriptsize
	\setlength{\tabcolsep}{3.2pt}
	\begin{tabular}{lccccc}
		\toprule
		Case & Tuples (\(10^3\)) D/O & Check Mean \(R\) & Check Med \(R\) & Write Med (\si{ms}) & Mem \(R\) \\
		\midrule
		G1 & \(0.12/0.15\)     & \num{1.10} & \num{1.10} & \num{5.15} & \num{0.99} \\
		G2 & \(0.21/0.26\)     & \num{1.20} & \num{1.22} & \num{4.81} & \num{1.00} \\
		G3 & \(0.80/0.87\)     & \num{1.15} & \num{1.13} & \num{5.20} & \num{1.01} \\
		G4 & \(1.30/1.41\)     & \num{1.22} & \num{1.11} & \num{5.68} & \num{1.02} \\
		G5 & \(6.23/6.48\)     & \num{1.55} & \num{1.16} & \num{7.77} & \num{1.02} \\
		G6 & \(16.74/17.10\)   & \num{2.09} & \num{1.27} & \num{6.95} & \num{1.16} \\
		G7 & \(131.0/131.5\)   & \num{2.16} & \num{1.32} & \num{6.19} & \num{1.07} \\
		G8 & \(\mathrm{766.6/786.3}\) & \num{2.2} & \num{1.30} & \num{9.42} & \num{1.2} \\
		\midrule
		S1 & \(0.09/0.12\)     & \num{1.01} & \num{1.00} & \num{4.50} & \num{1.00} \\
		S2 & \(0.26/0.32\)     & \num{1.05} & \num{1.05} & \num{4.49} & \num{1.00} \\
		S3 & \(\mathrm{12.4/16.6}\) & \num{1.80} & \num{1.17} & \num{6.94} & \num{1.00} \\
		S4 & \(24.88/33.5\)   & \num{2.10} & \num{1.21} & \num{5.18} & \num{0.95} \\
		S5 & \(373.3/386.7\)   & \num{2.16} & \num{1.22} & \num{5.23} & \num{0.98} \\
		\bottomrule
	\end{tabular}
	\caption{Per-case results. Tuple counts of Domain and overlay, \(R=\frac{\text{Overlay}}{\text{Domain}}\) for check mean, check median, and memory.
	}
	\label{tab:results-compact}
\end{table}

Table~\ref{tab:results-compact} reports check-to-check comparisons using
\emph{Overlay/Domain} ratios $R$ for mean and median latency, plus overlay write median, memory ratio, and the graph size. We report ratios to quantify the overlay’s relative impact, where values greater than 1 indicate an increase over the baseline and larger ratios correspond to stronger overhead.  As expected, the overlay adds graph size and memory footprint; however, memory remains bounded with ratios in \([\num{0.95}, \num{1.2}]\) across all cases, indicating no runaway amplification. The percentage of allowed checks (\textit{access granted}) remained around $40$\% across all the scenarios. More significantly, the overlay increases check latency, but in a manner that remains practical. 

In absolute terms (as plotted in Figure~\ref{fig:latency-absolute}), check-latency growth is modest at the median across cases. G medians range from \(\SI{1.97}{ms}\) (G1 Domain) to \(\SI{6.83}{ms}\) (G8 Overlay), while S medians remain even tighter: \(\SI{2.01}{ms}\) (S1 Domain) to \(\SI{4.47}{ms}\) (S5 Overlay). Domain benchmarks show medians of \(\SI{1.97}{ms}\text{--}\SI{5.3}{ms}\) for G and \(\SI{2.0}{ms}\text{--}\SI{3.7}{ms}\) for S, demonstrating that checks are already fast. Overlay medians extend this only moderately: G medians increase to \(\SI{2.17}{ms}\text{--}\SI{6.8}{ms}\) and S to \(\SI{2.0}{ms}\text{--}\SI{4.5}{ms}\), confirming that median performance remains practical for interactive systems.

The divergence between mean and median latencies, especially in larger scenarios, indicate heavy upper-tail effects. For both families, check-mean ratios range from \(\num{1.01}\times\) (S1) to \(\num{2.20}\times\) (G8), while check-medians remain close to unity. This illustrates that there are slower \textit{check} queries that arise especially in bigger models.


In summary, even for deep hierarchies median latency remain under $7ms$ (G8), confirming real-time viability. Similarly, write operations remain efficient with medians in the range \(\SI{4.49}{ms}\text{--}\SI{9.42}{ms}\) across the reported suite. 
Without engine-level tuning, caching, or indexing beyond OpenFGA's defaults, these measurements represent a conservative lower bound. 
Standard optimizations, e.g., by caching usersets, normalizing delegation chains, sharding the graph, would further compress latency. Thus, the overlay approach is shown to be \textit{feasible and practical} within target systems.

\begin{figure}[t]
	\centering
	\begin{tikzpicture}
		\begin{axis}[
			width=\linewidth,
			height=0.68\linewidth,
			xlabel={Scenario index \(k\)},
			ylabel={Median Check latency \((\si{ms})\)},
			xmin=0.5, xmax=8.5,
			ymin=1, ymax=8,
			xtick={1,2,3,4,5,6,7,8},
			xticklabel style={anchor=north},
			grid=both,
			major grid style={gray!25},
			minor grid style={gray!15},
			ymajorgrids=true,
			legend columns=1,
		legend style={at={(0.02,0.98)},anchor=north west,draw=none,fill=none,font=\small},
			]
			
			
			\addplot+[mark=square, thick, solid, blue!65!black] coordinates {
				(1,1.972) (2,2.046) (3,2.408) (4,2.693)
				(5,2.978) (6,3.091) (7,3.551) (8,5.296)
			};
			\addlegendentry{$G_{\mathrm{Domain}}$: median}
			
			
			\addplot+[mark=square*, thick, dashed, blue!65!black] coordinates {
				(1,2.178) (2,2.489) (3,2.718) (4,2.992)
				(5,3.460) (6,3.918) (7,4.59) (8,6.83)
			};
			\addlegendentry{$G_{\mathrm{Overlay}}$: median}
			
%
			\addplot+[mark=diamond, thick, solid, green!35!black] coordinates {
				(1,2.008) (2,2.327) (3,2.801) (4,3.094) (5,3.683)
			};
			\addlegendentry{$S_{\mathrm{Domain}}$: median}
			
			
			\addplot+[mark=diamond*, thick, dashed, green!35!black] coordinates {
				(1,2.017) (2,2.435) (3,3.291) (4,3.728) (5,4.474)
			};
			\addlegendentry{$S_{\mathrm{Overlay}}$: median}
			
		\end{axis}
	\end{tikzpicture}
	\caption{Absolute median check-latency values for Domain and Overlay across G and S scenarios, showing practical growth.}
	\label{fig:latency-absolute}
\end{figure}

	\section{Related Work}\label{sec:rel}
\textbf{Governance for Agentic AI.}
Recent work on AI governance spans system design, identity, and runtime control. Zhang et al. apply classical security principles such as defense-in-depth to agents via a conceptual ``AgentSandbox'' centered on policy enforcement and data minimization~\cite{Zhang}. Syros et al.\ present a centralized architecture combining user-centric governance, cryptographic tokens, and a provider registry to mediate agent communication under user policies~\cite{Syros}. Huang et al.\ outline a zero-trust framework based on Verifiable Credentials as a complement to traditional IAM \cite{Huang}. Wang et al.\ move closer to our setting with a runtime governance stack that monitors delegation provenance for auditing \cite{Wang}. Across these efforts, however, authorization is typically expressed through RBAC/ABAC-style controls or token-based mechanisms rather than through delegation chains and scope envelopes treated as first-class predicates inside access rules. By contrast, we focus on \emph{compositional} authorization semantics that embed delegation relations directly into the policy model for evaluation.

Palumbo et al.~\cite{palumbo2026policy} propose a compiler  that instruments agent implementations with data rules. Their architecture emphasizes static rule compilation, but does not address delegation or scoping. Our work instead provides a compositional operator for overlaying these primitives onto authorization schemas. 
Closely related, Potti proposes Intent-Based Access Control (IBAC), where an LLM maps user intent, expressed in a prompt, to tool permissions encoded as OpenFGA tuples \cite{potti2024ibac}.
 Although IBAC uses similar tools, our work differs in purpose.
 While IBAC focuses on inferring what an agent may do from a prompt, our work specifies the \emph{relational logic} by which authority is delegated, bounded, and inherited across agents.

Tomasev et al.\ propose a framework for intelligent delegation that formalizes how agents decide when and to whom to delegate in agent systems~\cite{tomavsev2026intelligent}. Their focus is complementary to ours: they study delegation as a decision problem, whereas we study how a delegation, once made, should be represented as an enforceable authorization primitive and tracked through recursive chains.

\textbf{Agentic Protocols and Standards.}
A parallel branch studies how existing standards map to agentic delegation. The OpenID community has analyzed where OAuth can be extended and where limitations arise for agents~\cite{openid}. Proposals such as South et al.\ \cite{authenicatedDEL} illustrate such an extension. Decentralized-identity approaches (e.g., DID/VC-based systems~\cite{s24072215,damfsd}) articulate device-centric delegation that could be adapted for agents. Here, ``delegation'' is realized as secure credential issuance and presentation; in contrast, we treat delegation as a \emph{first-class relation} in authorization semantics, evaluated in concert with scope constraints. Integrating our model with decentralized identity infrastructure is promising future work.

\textbf{Authorization Models and Policy Composition.}
ReBAC systems e.g., Zanzibar \cite{Pang2019} and implementations like OpenFGA represent policies as typed relations and reduce checks to reachability~\cite{cheng2012relationship, giunchiglia2008relbac}. While these systems support expressive relation definitions, they generally do not prescribe \emph{model composition} mechanisms for injecting new primitives into  existing models. Policy-combination and algebraic operators (e.g., \cite{BonattiSamarati2002}) address decision aggregation and conflict resolution across multiple policies, whereas our contribution \emph{fuses} primitives (delegation chains, scope  envelopes) into the graph semantics so they are evaluated natively within checks.

\textbf{Authorization Logics and Trust Management.}
Authorization logics and trust-management systems, such as~\cite{RFC2693, RivestLampson1996,AppelNAL2014}, model principals, credentials, and delegation as logical statements; authorization reduces to proof search that a requester satisfies a capability. These frameworks provide strong foundations for \emph{delegation}, and \emph{attenuation via constraints}, typically consuming external credentials (certificates) as inputs to derivations. Our approach is complementary: we adopt a ReBAC relation model and introduce a \emph{compositional overlay} that injects delegation chains and scope envelopes directly into the authorization graph. The two perspectives can interoperate: logical proofs can materialize overlay tuples or satisfy guard conditions, while the overlay provides a scalable substrate for applying such evidence across checks and listings.

	\section{Conclusion}\label{sec:conclusion}
Agentic AI introduces a new operational model in which autonomous agents can act, reason, delegate, and collaborate with minimal human supervision. 
Such behavior challenges long-standing assumptions in IAM, where delegation is typically modeled as a static, token-mediated act. 
Modern agent ecosystems, however, require delegation and scoping to function as \emph{dynamic governance primitives} that support continuous enforcement and auditability. 

This paper presented an authorization framework that elevates delegation, scope, and contextual constraints to first-class constructs. We developed a taxonomy of delegation suitable for agents, introduced the notion of authorization envelopes, and formalized a
model capturing users, agents, sessions, and scopes as relationships. At the core of our contribution is a \emph{compositional overlay operator} that injects agentic semantics
into ReBAC policies. Grounded in non-deleting typed graph rewriting, the operator is conservative for domain principals while adding agent permissions only through a
human-rooted delegation chain and a compatible  envelope.
We proved this as an agent-authorization soundness condition: an agent
may obtain a lifted permission only when there exists an authorized
human principal, a valid delegation path from that principal to the
agent, and a valid scope/session witness for the requested object.

We operationalized these ideas through \emph{ACE}, an architectural
blueprint designed to integrate heterogeneous sources, context, and
dynamic delegation state into an authorization layer for agents.
A multi-agent coding assistant illustrated how enterprise policies can
be extended with delegation and scoping semantics while preserving the
underlying human-facing permission structure.

Our evaluation combined formal reasoning with empirical benchmarks on
large-scale models. The agentic overlay increases graph size and check
latency as expected, yet median check times remain under \(7\,\mathrm{ms}\)
even without specialized optimization.

Looking forward, several directions remain open:
interoperability with other authorization models, e.g., ABAC;
rigorous multi-policy composition; formal treatment of
scope attenuation; and authorization graph engines
tailored for large, dynamic agent populations. As agentic AI becomes
increasingly ubiquitous, our work provides a principled and extensible
foundation for building secure, accountable, and context-aware
authorization mechanisms capable of governing autonomous software
actors at scale.
	
	\bibliographystyle{ACM-Reference-Format}
	\bibliography{./sources.bib}
	
	\appendix
	
	\section{Open Science} 
All artifacts are included in the submitted supplementary material and are accessible to reviewers through the anonymous artifact URL:  \url{https://github.com/Amjad-Ibrahim-Huawei/compositional-paper}. This appendix enumerates all artifacts necessary to evaluate and reproduce the paper's core contributions.  

\begin{enumerate}

	\item{\textbf{Documentation}}
	
	\begin{itemize}
		\item {\texttt{openfga/general/6.gdrive/agent-ai/README.md}}
		\\ Official documentation of the GDrive use-case, model decisions, and expected performance characteristics.
		
		\item {\texttt{openfga/general/slack/README.md}}
		\\ Official documentation of the Slack benchmark scenario, authorization semantics, and experimental parameters.
		
		\item {\texttt{openfga/general/Evaluation\_Commands}}
		\\ Step-by-step execution commands for reproducing the benchmark.
	\end{itemize}

	\item{\textbf{Benchmarking Infrastructure}}
	
	\begin{itemize}
		\item {\texttt{openfga/general/benchmark.py}}
		\\ Main benchmark driver implementing the evaluation protocol. Executes checks, collects performance metrics, and logs execution traces.  
		
		\item {\texttt{openfga/general/setup\_and\_load.sh}}
		\\ Orchestration script for store initialization, model loading, and tuple population. Handles environment configuration and data ingestion required to prepare benchmarks.
		
		\item {\texttt{openfga/general/setup\_store.sh}}
		\\ Store creation and model schema initialization. Deploys authorization models to OpenFGA instances.
		
		\item {\texttt{openfga/general/delete\_store.sh}}
		\\ Cleanup utility between benchmark runs.
	\end{itemize}

	\item{\textbf{Data Generation and Tuple Population}}
	
	\begin{itemize}
		\item {\texttt{openfga/general/openfga\_tuple\_dataset\_generator.py}}
		\\ Generates synthetic relation tuple datasets for the GDrive scenario. Implements domain-specific rules for creating user-resource relationships at scale.
		
		\item {\texttt{openfga/general/openfga\_tuple\_slack\_generator.py}}
		\\ Generates synthetic relation tuple datasets for the Slack scenario. Populates workspace, channel, and user relationships according to Slack's authorization model.
		
		\item {\texttt{openfga/general/rebuild\_analysis\_from\_raw.py}}
		\\ Post-processing utility that transforms raw benchmark output into analysis-ready formats. Aggregates metrics and computes summary statistics.
	\end{itemize}
	
	\item {\textbf{Authorization Models}}
	
	\begin{itemize}
		\item {\texttt{openfga/general/6.gdrive/gdrive-domain.fga}}
		\\ Core G model defining relationships (owners, editors, viewers) and permission logic for document access control.
		
		\item {\texttt{openfga/general/6.gdrive/agent-ai/}}
		\\ G Overlay model variant.

		\item {\texttt{openfga/general/slack/model.fga}}
		\\ Core S model defining workspace and channel permission semantics.
		
		\item {\texttt{openfga/general/slack/agent-ai/}}
		\\Overlay model for Slack scenarios with AI integration.
	\end{itemize}

	\item { \textbf{Relation Tuple Datasets}}
	
	\begin{itemize}
		\item {\texttt{openfga/general/6.gdrive/agent-ai/generated/}}
		\\ Synthetic relation tuples for GDrive scenario. Contains domain and overlay files (G1--G7). (G8 files are around 80MiB so were excluded due to size limit; but they can be reproduced using the scripts as shown in the commands.)
		
		\item {\texttt{openfga/general/slack/agent-ai/generated/}}
		\\ Synthetic relation tuples for Slack scenario. Contains domain and domain files (S1--S4). (S5 files are around 50 MiB so were excluded from the repository due to size limit; but they can be reproduced using the scripts as shown in the commands.)
	\end{itemize}
	
	\item {\textbf{Experimental Results and Analysis}}
	
	\begin{itemize}
		\item {\texttt{openfga/general/results/analysis/}}
		\\ Aggregated analysis outputs, summary statistics, and processed metrics derived from raw benchmark runs.
		
		\item {\texttt{openfga/general/results/model\_7/}}
		\\ Benchmark CSV outputs for the baseline authorization model across all datasets (check-only queries on G1--G8).
		
		\item {\texttt{openfga/general/results/model\_8/}}
		\\ Benchmark CSV outputs for an optimized variant across all datasets (mixed query types on G1--G8).
		
		\item {\texttt{openfga/general/results/model\_slack\_domain/}}
		\\ Benchmark CSV outputs for the Slack domain model (check-only queries on S1--S5).
		
		\item {\texttt{openfga/general/results/model\_slack\_overlay/}}
		\\ Benchmark CSV outputs for the Slack overlay variant (mixed query types on S1--S5).
	\end{itemize}

\end{enumerate}

\section{Generative AI Usage}
We used OpenAI's ChatGPT (GPT-5.4, Plus plan) as an assistant during the preparation of this manuscript. Specifically, LLMs were used for editorial purposes (language polishing, clarification of phrasing, and restructuring of paragraphs), for suggesting \LaTeX{} snippets (e.g., tables, and plotting code), and for drafting Python scaffolding to generate synthetic OpenFGA tuples and parameterized test datasets. LLMs were used for editorial purposes in this manuscript, and all outputs were inspected by the authors to ensure accuracy and originality. All technical ideas, and experimental designs are our own; any code or data-generation logic initially drafted with the help of ChatGPT was subsequently reviewed, simplified, and re-implemented or directly validated by the authors, and all experiments reported in the paper can be reproduced from the code and parameters we explicitly provide. We did not use LLMs as a source of prior work or citations and relied on our own literature review for related work. 

\section*{Expansion of the Agentic Overlay}\label{sec:rewriteAppendix}
Table~\ref{tab:rewrite-rules} provides an implementation-oriented expansion of the overlay macros used by the operator. Each row summarizes the corresponding schema fragment and its intended effect.
\begin{table*}[!t]
	\centering
	\caption{Implementation-Oriented Expansion of the Agentic Overlay Macro}
	\renewcommand{\arraystretch}{1.12}
	\setlength{\tabcolsep}{5pt}
	\begin{tabular}{p{0.5cm}|p{1.5cm}|p{3cm}|p{10cm}}
		\toprule
		\textbf{Rule} & \textbf{Scope} & \textbf{Intent} & \textbf{LHS / RHS / Effects} \\
		\midrule
		
		\textbf{R0} & Overlay Bootstrap (once per model) & Ensures global agentic scaffolding and entrypoints. &
		\textbf{LHS:} model lacks \texttt{agent}, \texttt{session}, \texttt{scope}. 
		\newline \textbf{RHS:}
		\begin{minipage}[t]{\linewidth}\vspace{2pt}
			\begin{lstlisting}
				type agent, type session, type scope;
				scope.holder: [session] (with optional temporal guard);
				session.actor: [agent], session.as_agent := actor;
				scope.sessions := holder or sessions from parent;
				entrypoints:
				scope.agents_direct := as_agent from holder,
				scope.agents := agents_direct or agents from parent.
			\end{lstlisting}
		\end{minipage}
		\newline \textbf{Effect:} valid path from agent subject to user relation (Zanzibar/OpenFGA). \\\midrule
		
		\textbf{R1} & Scope Linkage (per type) & Attach domain type $T$ to scope; derive agents active in envelope. &
		\textbf{LHS:} type $T$ without \texttt{in\_scope}.
		\newline \textbf{RHS:}
		\begin{minipage}[t]{\linewidth}\vspace{2pt}
			\begin{lstlisting}
				in_scope: [scope]
				ags_in_scope: agents from in_scope
			\end{lstlisting}
		\end{minipage}
		\newline \textbf{Effect:} determines which agents are active for $T$ under context. \\\midrule
		
		\textbf{R2} & Delegation Chain Lift (per relation family) & Lift human permissions (viewer/editor/owner) to their delegates recursively. &
		\textbf{LHS:} $T$ with $\{\mu_T(\text{viewer}),\mu_T(\text{owner}),\mu_T(\text{editor}),\mu_T(\text{parent})\}$.
		\newline \textbf{RHS:}
		\begin{minipage}[t]{\linewidth}\vspace{2pt}
			\begin{lstlisting}
				chain_agents_for_rel_T =
				can_execute_on_my_behalf from {mu.viewer}
				or can_execute_on_my_behalf from {mu.owner}
				or can_execute_on_my_behalf from {mu.editor}
				or chain_agents_for_rel_T from {mu.parent}
			\end{lstlisting}
		\end{minipage}
		\newline \textbf{Effect:} collects all delegated agents for relation \texttt{rel}; conditions evaluated per check. \\\midrule
		
		\textbf{R3} & Intersection Gate (per relation family) & Require scope-membership AND delegation-chain. &
		\textbf{LHS:} $\texttt{ags\_in\_scope\_T and chain\_agents\_for\_*\_T}$
		\newline\textbf{RHS:}
		\begin{minipage}[t]{\linewidth}\vspace{2pt}
			\begin{lstlisting}
				delegated_agent_rel_T = ags_in_scope_T and chain_agents_for_rel_T
			\end{lstlisting}
		\end{minipage}
		\newline \textbf{Effect:} implements intersection $A \cap B$. \\\midrule
		
		\textbf{R4} & Agent Injection (per relation family) & Union delegated agents into existing permission. &
		\textbf{LHS:} $\texttt{existing can\_view or can\_edit on (T)}$
		\newline
		\textbf{RHS:}
		\begin{minipage}[t]{\linewidth}\vspace{2pt}
			\begin{lstlisting}
				can_rel = (existing) or delegated_agent_rel_T
			\end{lstlisting}
		\end{minipage}
		\newline \textbf{Effect:} agents share the same effective permission as humans. \\\midrule
		
		\textbf{R5} & Delegatee Condition Enrichment (global) & Add temporal/attribute guards on \texttt{delegatee} edges. &
		\textbf{LHS:} $\texttt{user.delegatee, agent.delegatee}$
		\newline
		\textbf{RHS:} \texttt{user.delegatee}, \texttt{agent.delegatee} allow \texttt{temporal\_delegation}, \texttt{conditional\_delegation}.
		\newline \textbf{Effect:} dynamic, context-bound delegation evaluation. \\\midrule
		
		\textbf{R6} & Temporal/ Scoping Guards (global) & Place temporal/audience restrictions on scope entries. &
		\textbf{LHS:} $\texttt{scope.holder}$
		\newline
		\textbf{RHS:} \texttt{scope.holder: [session with temporal\_delegation]}.
		\newline \textbf{Effect:} time-checked once via scope holder. \\
		
		\bottomrule
	\end{tabular}
	\label{tab:rewrite-rules}
\end{table*}

\section*{Evaluation Test Cases Table}\label{appendix:testcases}
Table~\ref{tab:params} show the detailed parameters used for each case shown in the evaluation. 
\begin{table*}[t]
	\centering
	\small
	\begin{tabular}{@{}llrrrrrrr@{}}
		\toprule
		Case & Family & Users & Groups & Folders & Docs/F & Agents & Sess/Ag & Notes \\
		\midrule
		G1 & Drive & 20  & 4  & 8   & 3  & 8   & 1 & GVR=0.5, DVR=0.15, SO=0.25\\
		G2 & Drive & 20  & 8  & 12  & 3  & 12  & 1 & GVR=0.5, DVR=0.15, SO=0.25\\
		G3 & Drive & 60  & 6  & 12  & 4  & 20  & 1 & GVR=0.5, DVR=0.4,  SO=0.25\\
		G4 & Drive & 100 & 10 & 20  & 8  & 33  & 1 & GVR=0.5, DVR=0.1,  SO=0.25\\
		G5 & Drive & 200 & 20 & 40  & 12 & 70  & 1 & GVR=0.5, DVR=0.1,  SO=0.25\\
		G6 & Drive & 300 & 30 & 60  & 16 & 100 & 1 & GVR=0.5, DVR=0.1,  SO=0.25\\
		G7 & Drive & 500 & 50 & 100 & 20 & 150 & 1 & GVR=0.5, DVR=0.25, SO=0.5\\
		G8 & Drive & 1000 & 100 & 200 & 30 & 500 & 1 & GVR=0.5, DVR=0.25, SO=0.5\\
		\midrule
		S1 & Slack & 50  & -- & --  & -- & 5   & 1 & WS=2, Ch/W=5, Writers/Ch=2 \\
		S2 & Slack & 120 & -- & --  & -- & 8   & 1 & WS=4, Ch/W=10, Writers/Ch=2 \\
		S3 & Slack & 400 & -- & --  & -- & 50  & 2 & WS=40, Ch/W=100, Writers/Ch=2 \\ 
		S4 & Slack & 800 & -- & --  & -- & 150 & 2 & WS=80, Ch/W=100, Writers/Ch=2 \\
		S5 & Slack & 1200& -- & --  & -- & 300 & 2 & WS=120, Ch/W=100, Writers/Ch=30, Temp=0.1 \\
		\bottomrule
	\end{tabular}
	\caption{Details of Evaluation test cases with Key parameters per case. (GVR=\texttt{group\_viewer\_ratio}, DVR=\texttt{doc\_direct\_viewer\_ratio}, SO=\texttt{session\_only\_fraction}, WS=workspaces, Ch/W=channels per workspace.)}
	\label{tab:params}
\end{table*}
	
\end{document}